\let\csname equation*\endcsname\relax
\let\csname endequation*\endcsname\relax
\newtheorem{theorem}{Theorem}
\newtheorem{lemma}{Lemma}
\newcommand{\tabincell}[2]{\begin{tabular}{@{}#1@{}}#2\end{tabular}} 
\journal{Nerual NetWorks}
\begin{document}

\begin{frontmatter}

%% Title, authors and addresses

%% use the tnoteref command within \title for footnotes;
%% use the tnotetext command for theassociated footnote;
%% use the fnref command within \author or \affiliation for footnotes;
%% use the fntext command for theassociated footnote;
%% use the corref command within \author for corresponding author footnotes;
%% use the cortext command for theassociated footnote;
%% use the ead command for the email address,
%% and the form \ead[url] for the home page:
%% \title{Title\tnoteref{label1}}
%% \tnotetext[label1]{}
%% \author{Name\corref{cor1}\fnref{label2}}
%% \ead{email address}
%% \ead[url]{home page}
%% \fntext[label2]{}
%% \cortext[cor1]{}
%% \affiliation{organization={},
%%            addressline={}, 
%%            city={},
%%            postcode={}, 
%%            state={},
%%            country={}}
%% \fntext[label3]{}

\title{Class-Imbalanced Complementary-Label Learning via Weighted Loss}

%% use optional labels to link authors explicitly to addresses:
%% \author[label1,label2]{}
%% \affiliation[label1]{organization={},
%%             addressline={},
%%             city={},
%%             postcode={},
%%             state={},
%%             country={}}
%%
%% \affiliation[label2]{organization={},
%%             addressline={},
%%             city={},
%%             postcode={},
%%             state={},
%%             country={}}

\author[mymainadress]{Meng Wei}

\author[mymainadress]{Yong Zhou}

\author[mymainadress]{Zhongnian Li}

\author[mymainadress]{Xinzheng Xu\corref{mycorrespondingauthor}}
\cortext[mycorrespondingauthor]{Corresponding author}
\ead{xxzheng@cumt.edu.cn}

\address[mymainadress]{School of Computer Science \& Technology, China University of Mining and Technology, Xuzhou,China} 

\begin{abstract}

Complementary-label learning (CLL) is widely used in weakly supervised classification, but it faces a significant challenge in real-world datasets when confronted with class-imbalanced training samples. In such scenarios, the number of samples in one class is considerably lower than in other classes, which consequently leads to a decline in the accuracy of predictions. Unfortunately, existing CLL approaches have not investigate this problem. To alleviate this challenge, we propose a novel problem setting that enables learning from class-imbalanced complementary labels for multi-class classification. To tackle this problem, we propose a novel CLL approach called Weighted Complementary-Label Learning (WCLL). The proposed method models a weighted empirical risk minimization loss by utilizing the class-imbalanced complementary labels, which is also applicable to multi-class imbalanced training samples. Furthermore, we derive an estimation error bound to provide theoretical assurance. To evaluate our approach, we conduct extensive experiments on several widely-used benchmark datasets and a real-world dataset, and compare our method with existing state-of-the-art methods. The proposed approach shows significant improvement in these datasets, even in the case of multiple class-imbalanced scenarios. Notably, the proposed method not only utilizes complementary labels to train a classifier but also solves the problem of class imbalance.
\end{abstract}

%%Graphical abstract
%\begin{graphicalabstract}
%\includegraphics{grabs}
%\end{graphicalabstract}

\begin{keyword}
%% keywords here, in the form: keyword \sep keyword
weakly supervised learning \sep complementary labels\sep class imbalanced\sep multi-class classification
%% PACS codes here, in the form: \PACS code \sep code

%% MSC codes here, in the form: \MSC code \sep code
%% or \MSC[2008] code \sep code (2000 is the default)

\end{keyword}

\end{frontmatter}

%% \linenumbers
\section{Introduction}
Ordinary supervised classification requires precise ground-truth label of each training instance, making it time-consuming for large-scale datasets. To tackle this problem, weakly supervised learning (WSL) has been increasingly explored in the last decades, which allows training a classifier from less costly data. Previous works in WSL include semi-supervised learning \citep{semi-supervised_1, semi-supervised_2, semi-supervised_3, semi-supervised_5}, noisy-label learning \citep{noisy_1, noisy_2, noisy_3, noisy_6, noisy_7}, partial-label learning \citep{pl_1, pl_2, pl_4, pl_5, pl_6}, unlabeled-unlabeled learning \citep{uu_1, uu_2} and positive-unlabeled learning \citep{pu_1, pu_2, pu_3, pu_4, pu_5, pu_6, pu_7}.

In this paper, we focus on complementary-label learning (CLL) \citep{cll_1, cll_2, cll_3, cll_4, cll_5, cll_6, cll_7}, another scenario of WSL, which specifies the class label that the training instance does not belong to. Complementary labels are more accessible than ground-truth labels in some domains \citep{cll_1}. For instance, compared to labeling an instance with a precise ground-truth label from all candidate labels, a complementary label enables crowdsourced workers to choose one label from a set of incorrect classes, thereby simplifying the labeling process. In $ K $-class classification when $ K > $ 2, the complementary label is sampled randomly and uniformly from $ K - 1 $ class. When an instance is labeled with complementary label $i$, it implies that any one of the ordinary labels $ \{1, \ldots i-1, i+1, \ldots, K \} $ could potentially be the ground-truth label.
\begin{figure*}[!htbp]
	\centering
	\begin{adjustbox}{center}
		\includegraphics[width=4.5in]{./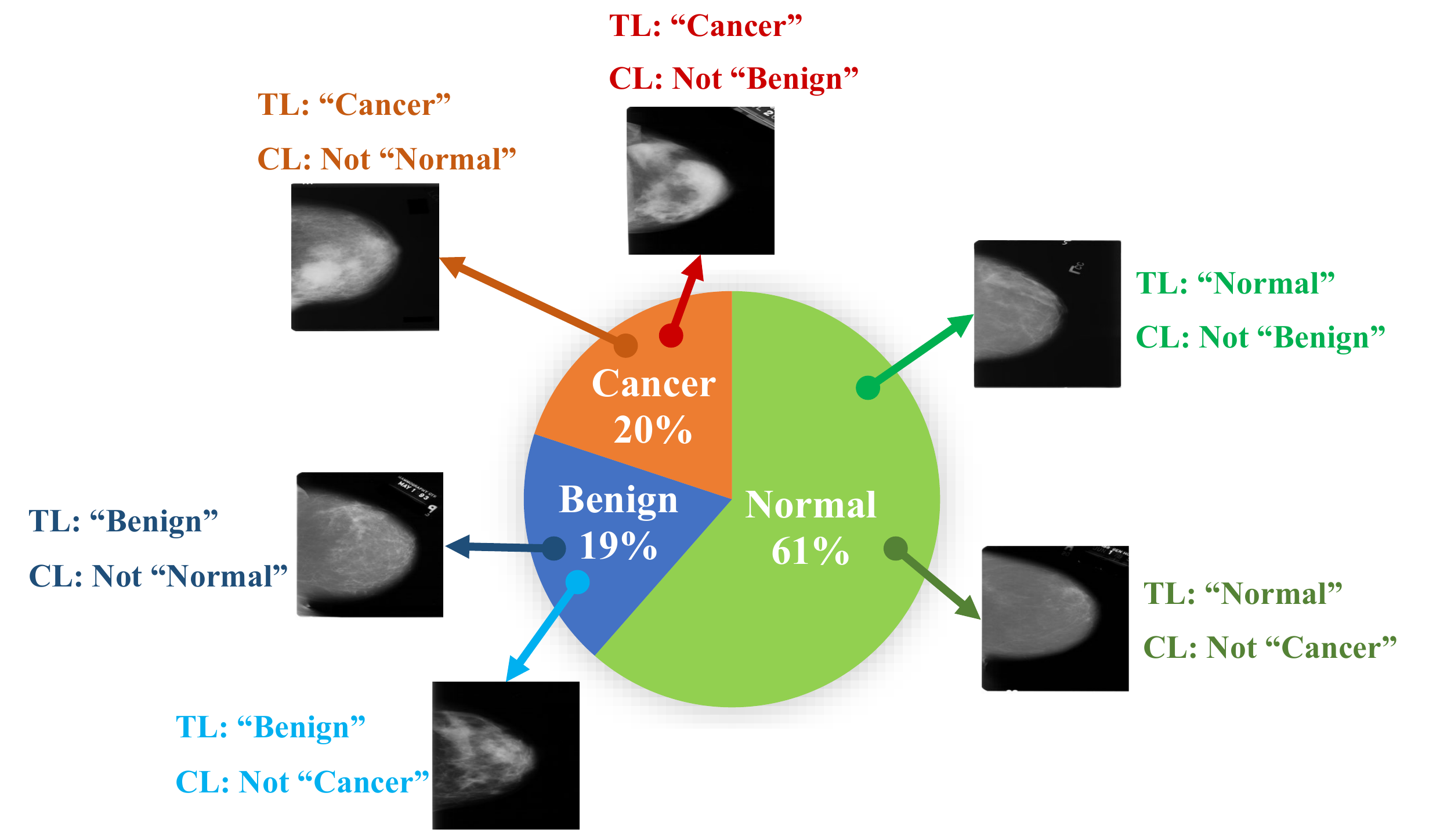}
	\end{adjustbox}
	\caption{The comparative analysis of complementary label and ground-truth label on DDSM datasets. To protect the privacy of patients, collecting complementary labels may be a more viable alternative to obtaining accurate ground-truth labels. In addition, the DDSM dataset exhibits an imbalanced distribution with normal images constituting $61\%$, benign images $19\%$, and cancer images $20\%$. Here, The term "TL" refers to ground-truth label, whereas "CL" denotes complementary laebl.}
	\label{figure_1}
\end{figure*}

However, a significant challenge arises due to class imbalance in practical complementary-label learning. For instance, in the medical disease detection \citep{medical_disease_detection}, there are typically more samples from healthy people than from patients, particularly in privacy-sensitive diseases \citep{application_1}. As illustrated in Figure \ref{figure_1}, the DDSM \footnote{DDSM: \url{http://www.eng.usf.edu/cvprg/Mammography/Database.html}} (Digital Database for Screening Mammography) dataset inherently holds privacy attributes and requires professional knowledge to correctly identify cancer images. Considering the protection of privacy and reducing the difficulty of labeling work, collecting complementary labels may be a more viable alternative to obtaining accurate ground-truth labels. Furthermore, there is an inherent class imbalance in DDSM dataset, wherein a significant portion of the dataset is composed of normal images, accounting for approximately $61\%$. In contrast, only $19\%$ and $20\%$ of the dataset are comprised of benign and cancer images, respectively. This inherent class imbalance presents a challenge for the model to effectively learn from complementary labels. Unfortunately, existing CLL methods have not investigated the issue of learning from class-imbalanced complementary labels, which leads to reduced prediction accuracy, particularly in imbalanced classes. Previous CLL approaches assume a balanced distribution of labeled samples in each class, which is difficult to maintain in real-world datasets\citep{im_1, im_2}. As a result, we propose a novel CLL approach to learn from class-imbalanced complementary labels.

Class imbalance is considered to be one of the main reasons for the performance decline \citep{weighted_loss_6, weighted_loss_1, weighted_loss_2}. Models trained on class-imbalanced datasets tend to favor the majority class and ignore the minority class. To mitigate this problem, supervised classification rebalance data by undersampling or oversampling it, but this results in a huge lack of information or simply duplication of existing data \citep{weighted_loss_2, weighted_loss_3}. Alternatively, cost-sensitive learning assigns higher weights to minority classes to minimize overall losses \citep{weighted_loss_6, weighted_loss_1, weighted_loss_4}. However, these methods are not suitable for CLL since they rely on ground-truth labels. Therefore, this inspires us to propose a novel CLL approach to learn from class-imbalanced complementary labels.

In this study, we investigate the issue of class imbalance in complementary labels for the first time. Specifically, we propose a novel problem setting characterized by a significant disparity in the number of training samples among certain classes. This phenomenon results in an imbalance of complementary labels, which poses a challenge for accurate classification and prediction of existing CLL. To alleviate this issue, we propose a novel CLL approach, called Weighted Complementary-Label Learning (WCLL), which introduces a weighted loss function into the empirical risk to reduce the overall loss. It is theoretically proved that the proposed method can converge to the optimal solution. Additionally, our proposed method can also be applied to multi-class imbalanced setting. To demonstrate the effectiveness of our approach, we conduct comprehensive experiments on benchmark datasets and compared it with state-of-the-art CLL methods. The following is a summary of the major contributions:

\begin{enumerate}
	\item [$ \bullet $] We propose a novel approach for learning from class-imbalanced complementary labels by introducing a weighted loss term to the empirical risk to minimize the total loss value. This approach not only utilizes complementary labels to train a classifier but also effectively alleviate the issue of class imbalance.
	\item [$ \bullet $] It is theoretically proved that the classifier learned from the class-imbalanced complementary labels can converge to $ \mathcal{O}(1/\sqrt{n}) $ with the training samples increasing.
	\item  [$ \bullet $] Our experiments show that better performance can be achieved on class-imbalanced complementary labels. Furthermore, we believe this study can provide useful guidance for researchers interested in bringing imbalance problems into CLL.
\end{enumerate}

The rest of this paper is structured as follows. Section 2 reports on related work. Section 3 gives formal definitions about ordinary multi-class classification and complementary lables learning. Section 4 presents the proposed CLL approach with theoretical analyses and algorithmic specifics. Section 5 reports the results of the comparative experiment. Finally, Section 6 concludes this paper. 

\section{Literature Review}
In this section, we provide an overview of the relevant references, which are mainly categorized into two themes: class imbalance and complementary labels.
\subsection{Class Imbalance} 
Real-world data typically exhibits a class-imbalanced label distribution, which hinders the generalizability of machine learning models \citep{weighted_loss_6,im_1, im_2}. To alleviate this issue, various algorithms have been proposed. The most commonly used strategy is to re-balance the training samples \citep{im_5}. Existing approaches can be roughly divided into three categories.  

The first directly balances the sampling distribution at the data level by over-sampling the minority class or under-sampling the majority class, or both \citep{im_4, im_6, im_7}. \citet{weighted_loss_3} and \cite{weighted_loss_5} utilized universum data to re-balance the class-imbalanced data, but this method is not effective for highly imbalanced data due to the loss of substantial information.

The second alleviates the imbalance problem from the model side. \citet{weighted_loss_2} embedded ensemble learning into deep convolutional neural networks by attaching multiple auxiliary classifiers to different layers of the CNN model. \citet{weighted_loss_4} proposed a large-scale fuzzy least squares twin support vector machines to solve the overfitting problem and avoid the operation of matrix inverse, but this method is only applicable to binary classification.

The third modifies the loss function by assigning higher costs to examples from minority classes \citep{weighted_loss_6, weighted_loss_1}. \citet{weighted_loss_6} utilized an inverse proportional regularization penalty to reweight unbalanced classes. \citet{weighted_loss_1} proposed a hybrid neural network with a cost-sensitive support vector machine to tackle class imbalance, but this method is only suitable for multimodal data. \citet{weighted_loss_7, weighted_loss_10} proposed a rebalancing strategy that utilizes a dynamic weighted loss function, which assigns weights based on class frequency and prediction probability. \citet{weighted_loss_8} proposed a recommended method for deep neural network learning based on item embedding and weighted loss function. \citet{weighted_loss_9} discussed the application of the weighted Kappa loss function in multiclass classification with ordinal data. 

These three distinct approaches provide solutions to alleviate class imbalance from varying perspectives, thereby offering valuable guidance for this issue in CLL. However, these methods rely on ground-truth labels, which is unsuitable for direct application to CLL scenarios.

\subsection{Complementary labels}
Recently, complementary labels learning (CLL) has been widely used in our daily lives \citep{application_1,application_2,application_3}, particularly in data privacy. For certain private questions \citep{cll_1,cll_2}, eliminating false responses can protect the correct one. However, the utilization of complementary labels presents a challenge, as complementary labels are less informative than ordinary labels. Previous research has explored various methods to alleviate this issue, which can be categorized into two branches. 

The first branch aims to model the relationship between the complementary label $\bar{y}$ and the ground-truth label $y$. \citet{cll_1} proposed an unbiased risk estimator (URE) with the assumption that the relationship is unbiased and provided theoretical analysis. However, this approach is only effective for specific loss functions. To alleviate this problem, \citet{cll_3} proposed a general URE framework for arbitrary loss functions. In contrast, \citet{cll_2} used a multi-class Cross-Entropy loss function to solve CLL tasks with the assumption that the relationship is biased, which additionally requires a set of anchor instances for transition probability estimation.
\begin{table*}[!htbp]
	% increase table row spacing, adjust to taste
	\renewcommand{\arraystretch}{1}
	\label{table_1}
	\caption{Comparison of the proposed methods with previous works}
	\begin{adjustbox}{center}
		\begin{tabular}{c|c|c|c|c}%c表示文本居中，c的个数是列数
			\toprule [1pt] 
			Methods & \tabincell{c}{Complementarys \\ label used} &\tabincell{c}{Loss assump. \\ free} & \tabincell{c}{Model assump. \\ free} & Class imbanlance \\
			\midrule
			\citet{cll_1} & $ \checkmark $ & $ \times $ & $ \checkmark $ & $ \times $ \\
			\citet{cll_2} & $ \checkmark $ & $ \times $ & $ \times $  & $ \times $ \\
			\citet{cll_3} & $ \checkmark $ & $ \checkmark $ & $ \checkmark $ & $ \times $ \\
			\citet{cll_7} & $ \checkmark $ & $ \checkmark $ & $ \checkmark $ & $ \times $ \\
			\citet{cll_8} &  $ \checkmark $ & $ \checkmark $ & $ \checkmark $ & $ \times $ \\
			\citet{cll_9} &  $ \checkmark $ & $ \checkmark $ & $ \checkmark $ & $ \times $ \\
			\midrule
			Proposed (WCLL) & $ \checkmark $ & $ \checkmark $ & $ \checkmark $ & $ \checkmark $ \\
			\bottomrule[1pt]
		\end{tabular}
	\end{adjustbox}
\end{table*}

The second branch directly models from the output of classifiers.  \citet{cll_7} proposed a discriminative model that directly models from the output of trained classifiers. The goal is to make the predictive probability of the complementary label approach zero. In addition, there are some interesting setting within the domain of complementary label learning. For instance, \citet{cll_8} explored the introduction of noise in the process of complementary label learning and proposed robust loss functions to learn from noisy complementary labels. \citet{cll_9} conducted a comprehensive study on learning a classifier specifically designed for multiple labeled complementary labeling instances.

Unfortunately, existing CLL approaches do not take into account the problem of class imbalance. Therefore, there is an urgent need to investigate a novel approaches to alleviate this problem. In this regard, we propose a novel CLL approach, called Weighted Complementary-Label Learning, which not only utilizes complementary labels but also effectively alleviate the issue of class imbalance.
\section{Formulation}
In this section, we give notations and review the formulations of ordinary multi-class classification and complementary-label learning.
\subsection{Ordinary Multi-Class Classification}
Let $\mathcal{X} \subset \mathbb{R}^{d}$ denotes the feature space, and $\mathcal{Y} = \{1, 2, \ldots, K\}$ denotes the label space. Consider a random sample $(\textbf{\textit{x}}, y) \sim P(\textbf{\textit{x}}, y)$ drawn from a joint probability distribution over $\mathcal{X}$ and $\mathcal{Y}$. Let $ \mathcal{D}_{L}=\{(\textbf{\textit{x}}_{i}, y_{i}) \}_{i=1}^{N_l} $ denotes the set of training samples, which are sampled independently and identically from $ P(\textbf{\textit{x}},y) $. In ordinary multi-class classification, each instance is associated with a ground-truth label. The goal is to learn a classifier that predicts the probability of the class label, i.e., $P(y|\textbf{\textit{x}})$, and establishes a decision function $f\colon \textbf{\textit{x}} \rightarrow \{1,2,\ldots,K\}$. Here, $f$ denotes an arbitrary function, and $f_{i}(\textbf{\textit{x}})$ denotes the $i$-th element of $f(\textbf{\textit{x}})$. The classification risk for the decision function $f$ is defined with respect to a loss function $\ell$ as:
\begin{equation}
	R(f;\ell)=\mathbb{E}_{(\textbf{\textit{x}}, y)\sim P(\textbf{\textit{x}}, y)}[\ell(f(\textbf{\textit{x}}),y)], 
\end{equation}
where $\mathbb{E}$ refers to the expectation. The approximating empirical risk is defined as:
\begin{equation}
	\hat{R}(f;\ell)=\frac{1}{N_l}\sum_{i=1}^{N_l}\ell(f(\textbf{\textit{x}}_i),y_i), 
\end{equation}
where $ N_l $ refers to the number of training samples.
\subsection{Learning from Complementary Labels}
In contrast to ordinary multi-class classification, complementary labels learning (CLL) assigns a complementary label to each instance, which specifies the label that the instance does not belong to. Let $\mathcal{\bar{Y}} = \{1,2,\ldots,K\}$ denotes the complementary label space. Considering a joint probability distribution $\bar{P}(\textbf{\textit{x}},\bar{y}) \neq P(\textbf{\textit{x}},y)$, where $(\textbf{\textit{x}}, \bar{y})$ denotes a training instance with its corresponding complementary label $\bar{y}$. It is important to note that \citet{cll_1} proposed a URE framework for CLL, assuming an unbiased relationship between $\bar{y}$ and $y$. The classification risk can be defined as follows:
\begin{equation}
	R(f;\ell)=\mathbb{E}_{(\textbf{\textit{x}},\bar{y})\sim\bar P(\textbf{\textit{x}}, \bar{y})}[\bar{\ell}(f(\textbf{\textit{x}}),\bar{y})].
\end{equation}
In addition, \citet{cll_7} modelled directly from the classifier's output by using $\bar{f}(\textbf{\textit{x}})=1 - f(\textbf{\textit{x}})$. The loss function $\bar{\ell}$ can be written as:
\begin{equation}
	\bar{\ell}(f(\textbf{\textit{x}}), \bar{y}) = \ell(1-f(\textbf{\textit{x}}), y).
\end{equation}
Therefore, the risk of CLL classification can be expressed as:
\begin{equation}
	R(f;\ell)=\mathbb{E}_{(\textbf{\textit{x}}, \bar{y}) \sim \bar{P}(\textbf{\textit{x}}, \bar{y})}[\ell(1-f(\textbf{\textit{x}}), y)].
\end{equation}
\section{Method}
This section introduces the problem setting and proposes the weighted model. Additionally, we also derive the estimation error bound of the proposed method to provide a theoretical guarantee.

\subsection{Problem Setting}
\subsubsection{Notation}
Let $ \mathcal{X} \subset \mathbb{R}^{d} $ denotes the feature space, and $ \bar{\mathcal{Y}} = \{1, 2, \ldots, K\} $ denotes the complementary label space. Let $ \bar{\mathcal{D}}_L = \{(\textbf{\textit{x}}_{i}, \bar{y}_{i}) \}_{i=1}^{N_l} $ be sampled from $ \bar{P}(\textbf{\textit{x}},\bar{y}) $ at an imbalanced scale $ \alpha $, where $ N_l $ denotes the number of training samples. The dataset $ \bar{\mathcal{D}}_{L} $ is composed of two subsets: $ \bar{\mathcal{D}}_{min} $ and $ \bar{\mathcal{D}}_{maj} $, where $ \bar{\mathcal{D}}_{min} = \{(\textbf{\textit{x}}, \bar{y}) \mid \bar{y} \in \mathcal{T}_{min} \}$ and $ \bar{\mathcal{D}}_{maj} = \{(\textbf{\textit{x}}, \bar{y}) \mid \bar{y} \in \mathcal{T}_{maj} \}$. Note that $ \bar{\mathcal{D}}_{min} \cup \bar{\mathcal{D}}_{maj} = \bar{\mathcal{D}}_{L},  \bar{\mathcal{D}}_{min} \cap \bar{\mathcal{D}}_{maj} = \varnothing $ and $ N_{min} \ll N_{maj} $. Moreover, let $ \mathcal{T}_{min} = \{\bar{y} | (\textbf{\textit{x}},\bar{y}) \in \bar{\mathcal{D}}_{min} \} $ and the number of each class from $ \mathcal{T}_{min} $ is equal, i.e., $ N_i = N_j, \forall i \neq j, i,j \in \mathcal{T}_{min} $. Similarly, let $ \mathcal{T}_{maj} = \{\bar{y} | (\textbf{\textit{x}},\bar{y}) \in \bar{\mathcal{D}}_{maj} \}  $ and $ N_i = N_j, \forall i \neq j, i,j \in \mathcal{T}_{maj} $. Let $ p $ be the class-imbalanced proportion, which denotes the radio between the number of samples in the classes from $ \mathcal{T}_{maj} $ and $ \mathcal{T}_{min} $, i.e., $ p = \frac{N_i}{N_j}, i \in \mathcal{T}_{maj}, j \in \mathcal{T}_{min} $.

\subsubsection{Class-Imbalanced Setup} 
During the training phase, the dataset $ \bar{\mathcal{D}}_L $ is sampled from $ \bar{P}(\textbf{\textit{x}},\bar{y}) $ at an imbalanced scale denoted by the vector $ \alpha = \{\alpha_1, \alpha_2, \ldots, \alpha_i\} $, where $ \alpha_i $ denotes the number of samples belonging to the $ i $-th class. In this study, the value of $ \alpha_i $ is defined as follows:
\begin{equation}
	\alpha_i = \left\{
	\begin{aligned}
		N_l * \frac{1}{crad(\mathcal{T}_{min}) + p * crad(\mathcal{T}_{maj})}, \quad i \in \mathcal{T}_{min}\\
		N_l * \frac{p}{crad(\mathcal{T}_{min}) + p * crad(\mathcal{T}_{maj})}, \quad i \in \mathcal{T}_{maj}\\
	\end{aligned}
	\right
	.,
\end{equation}
where $ crad(\mathcal{T}_{min}) $ denotes the size of $ \mathcal{T}_{min} $ and $ crad(\mathcal{T}_{maj}) $ denotes the size of $ \mathcal{T}_{maj} $. As mentioned above, a complementary label specifies the class label that the instance does not belong to. In our setting, the number of complementary labels from $ \mathcal{T}_{min} $ is more than other labels from $ \mathcal{T}_{maj} $. We assume that  the number of complementary labels for each class is inversely proportional to the number of samples in that class, i.e., the proportion of complementary label for the $ j $-th class,denoted as $ \pi_j $, can be defined as follows:
\begin{equation}
	\pi_j = \frac{1/{\alpha_j}}{\sum_{i=1}^{K} 1/{\alpha_i}}.
\end{equation}

During the training phase, we utilize class-imbalanced complementarily labeled data. While in the testing phase, we utilize class-balanced data, which also employed by previous CLL approaches. Our goal is to derive an optimal decision function $ f $ to minimize the empirical risk, i.e., $ {f^*} = argmin\hat{R}(f, \ell)$. 
\subsection{The Weighted Model}
Given this novel setting, the data at hand is imbalanced, which increases the difficulty of CLL multi-class classification tasks. As previously mentioned, the classification risk of the proposed CLL setting can be described as:
\begin{equation}
	R(f;\ell)=\mathbb{E}_{(\textbf{\textit{x}},\bar{y}) \sim \bar{P}(\textbf{\textit{x}}, \bar{y})}[\bar{\ell}(f(\textbf{\textit{x}}),\bar{y})],
\end{equation}
where $ \bar{\ell} $ denotes the complementary loss.
\begin{figure*}[!htbp]
	\centering
	\begin{adjustbox}{center}
		\includegraphics[width=5.5in]{./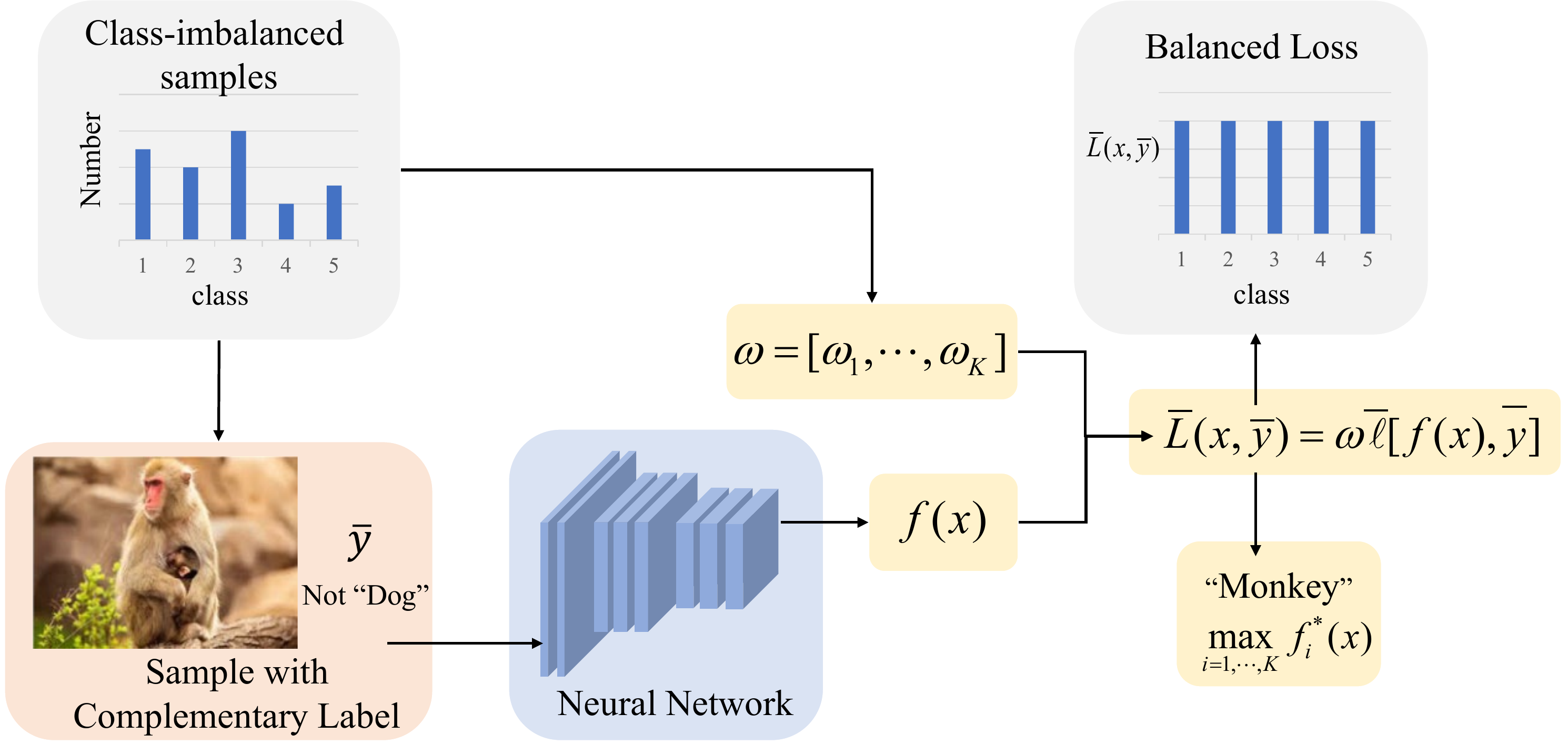}
	\end{adjustbox}
	\caption{The training process of the proposed method. The proposed method involves computing weights for the class-imbalanced dataset and incorporating them into the complementary labeled losses to achieve balanced class losses. During training, our method only use complementary label to train the neural network.}
	\label{figure_2}
\end{figure*}
\begin{algorithm}[htb]
	\caption{ Weighted Complementary-Label Learning}
	\label{alg:Framwork}
	\begin{algorithmic}[1]
		\REQUIRE ~~\\ 
		The set of complementarily labeled training instances, $ \{X_{i}\}, i\in\{1,2,\ldots,K\} $, where $ X_{i} $ denotes the samples labeled as class $ i $, which means the class label that the instance does not belong to;\\
		The number of epochs, $ T $;\\
		An external stochastic optimization algorithm, $ \mathcal{A} $;
		\ENSURE ~~\\ 
		model parameter $ \theta $ for $f(\textbf{\textit{x}}, \theta)$;
		\FOR{$t=1$ to $T$}
		\STATE Shuffle $ \{X_{i}\} $ into $ B $ mini-batches;
		\STATE Calculate $ \omega_j $ accoding to equation (13);
		\FOR{$ b=1 $ to $ B $}
		\STATE Denotes $ r_{i}^{b}(\theta)=-(K-1)\omega_{i}\pi_{i}\ell(f(\textbf{\textit{x}}_b),i)+\sum_{j=1}^{K}\omega_{j}\pi_{j}\ell(f(\textbf{\textit{x}}_b), j) $;
		\STATE Denotes $ L^{b}(\theta)=\sum_{i=1}^{K}r_{i}^{b}(\theta) $
		\STATE Set gradient $ -\bigtriangledown_{\theta}L^{b}(\theta) $
		\STATE Update $ \theta $ by $ \mathcal{A} $
		\ENDFOR
		\ENDFOR
	\end{algorithmic}
\end{algorithm}
The imbalanced sampling distribution creates an imbalance in the generated complementary labels, thereby making it challenging to minimize the total loss due to the increasing class loss with an increasing number of class samples. Additionally, data from minor classes may be characterized as noisy labels,  which makes the classifier training process become more challenging. In order to boost the performance of the trained classifier, we propose an adjustment to the loss function to rebalance the sampling distribution. Specifically, we introduce a weighted loss term for each class sample to minimize the total loss value to complementary loss, as defined below:
\begin{equation}
	\hat{\ell}(f(\textbf{\textit{x}}), \bar{y}) = \omega \bar{\ell} (f(\textbf{\textit{x}}), \bar{y}),
\end{equation}
where $ \omega $ denotes the $ K $-dimensional loss weight vector for $ K $-class, which is inversely proportional to the percentage of complementary labels. In accordance with the proposed problem setting, the complementary loss $ \hat{\ell} $ can be expressed as:

\begin{equation}
	\begin{split}
		\hat{\ell}(f(\textbf{\textit{x}}),j) = -(K-1)\omega_{j}\ell(f(\textbf{\textit{x}}),j) 
		+ \sum_{i=1}^{K}\omega_{i}\ell(f(\textbf{\textit{x}}), i),
	\end{split}
\end{equation}
where $ j\in\{1,\ldots,K\} $, and $ \omega_{j} $  denotes the weight assigned to the $j$-th class. 

By using (10), the classification risk can be described as:
\begin{equation}
	\begin{split}
		R(f;\ell)=\sum_{j=1}^{K}\pi_{j}\mathbb{E}_{(\textbf{\textit{x}}, \bar{y}) \sim \bar{P} (\textbf{\textit{x}}, \bar{y})}[-(K-1)\omega_{j}\ell(f(\textbf{\textit{x}}),j) + \sum_{i=1}^{K}\omega_{i}\ell(f(\textbf{\textit{x}}), i)].
	\end{split}
\end{equation}
where $ \pi_{j}$ denotes the base rate of the $ j $-th class.
Since the training dataset $ \bar{\mathcal{D}}_{L} $ is sampled from $ \bar{P}(\textbf{\textit{x}},\bar{y}) $, the expression (11) can be approximated by 
\begin{equation}
	\begin{split}
		\hat{R}(f;\ell)= \frac{1}{N_l} \{ \sum_{z=1}^{K} \pi_z \sum_{i=i}^{N_z}[-(K-1)\omega_{z}\ell(f(\textbf{\textit{x}}_{i}),z) + \sum_{j=1}^{K}\omega_{j}\ell(f(\textbf{\textit{x}}_{i}), j)] \} .
	\end{split}
\end{equation}
Here, we set $ \omega = [\omega_1, \omega_2, \ldots, \omega_j] $. $ \omega_j $ denotes the $ j $-th element of $ \omega $ and can be defined as follows:
\begin{equation}
	\omega_j = \frac{1/\pi_j}{\sum_{i=1}^{K}1/\pi_i},
\end{equation}
It is important to note that the weights in the proposed model satisfy the condition $\sum_{j=1}^{K}\omega_j = 1$ and $\omega_j \geq 0$. The incorporation of these weighted losses allows the model to effectively learn from training data with limited samples. Consequently, this approach is expected to enhance both the classification accuracy of imbalanced classes and the overall performance of the model. To provide a comprehensive understanding of the proposed method, Algorithm 1 illustrates the overall algorithmic procedure, while Figure \ref{figure_2} illustrates the training process.

\subsection{Estimation Error Bound}
Here, we analyze the generalization estimation error bound for the proposed method. Let $ f $ be the classification vector function in the hypothesis set $ \mathcal{F} $. Let $ \ell[f(\textbf{\textit{x}}), i] \leqslant L_f, \pi_{i} \leqslant \pi_K, \omega_j \leqslant \omega_K, i,j \in \{1, 2, \ldots, K\} $. Using $ L_{\phi} $ to denote the Lipschitz constant \citep{rademacher} of $ \phi $, we can establish the following lemma.
\begin{lemma}
	For any $ \delta > 0 $, with the probability at least $ 1- \delta / 2 $, we have 
	\begin{equation}
		\begin{split}
			\mathop{sup}_{f\in\mathcal{F}} | \hat{R}(f) - R(f) | \leqslant 2 \omega_{K} L_{\phi}  [ \pi_{K} (K-1)  + K] \mathfrak{R}_{N_l}(\mathcal{F}) 
			+ \sqrt{\frac{I^2log\frac{4}{\delta}}{2N_l}},
		\end{split}
	\end{equation}
	where $ R(f) = \mathbb{E}_{\bar{P}} $, $ \hat{R}(f) $ denotes the empirical risk estimator of $ R(f) $, and $ \mathfrak{R}_{N_l}(\mathcal{F}) $ are the Rademacher complexities \citep{rademacher} of $ \mathcal{F} $ for the sampling of size $ N_l $ from $ \bar{P}(\textbf{\textit{x}}, \bar{y}) $, $ I = 2 \omega_KL_f[\pi_K(K-1)+K] $.
\end{lemma}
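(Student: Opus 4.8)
The plan is to follow the standard uniform-convergence route built on McDiarmid's bounded-difference inequality together with symmetrization and Talagrand's contraction lemma, carrying the weighting constants $\omega_j,\pi_j$ through every step. Define the deviation $\Phi(\bar{\mathcal{D}}_L)=\sup_{f\in\mathcal{F}}|\hat{R}(f)-R(f)|$ as a function of the sample, and aim to (i) concentrate $\Phi$ around its expectation and (ii) bound $\mathbb{E}[\Phi]$ by a Rademacher complexity.

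First I would verify the bounded-difference property needed for McDiarmid. Replacing a single training instance $(\textbf{\textit{x}}_i,\bar{y}_i)$ alters only one summand of $\hat{R}(f)$ in (13); using $\ell[f(\textbf{\textit{x}}),i]\le L_f$, $\pi_i\le\pi_K$, $\omega_j\le\omega_K$ together with $\sum_{j=1}^K\omega_j=1$, the magnitude of that per-example term is at most $\omega_K L_f[\pi_K(K-1)+K]/N_l$, so swapping it perturbs $\Phi$ by at most $I/N_l$ with $I=2\omega_K L_f[\pi_K(K-1)+K]$. McDiarmid's inequality then gives, with probability at least $1-\delta/2$,
\[
\Phi(\bar{\mathcal{D}}_L)\le \mathbb{E}[\Phi(\bar{\mathcal{D}}_L)]+\sqrt{\frac{I^2\log\frac{4}{\delta}}{2N_l}},
\]
which already produces the second term of the bound (the $4/\delta$ arising from the two-sided event combined with the $\delta/2$ confidence split).

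Next I would control $\mathbb{E}[\Phi]$ by the usual symmetrization argument, introducing a ghost sample and Rademacher variables $\sigma_i$ to obtain $\mathbb{E}[\Phi]\le 2\,\mathbb{E}_{\sigma}\mathbb{E}[\sup_{f}\frac{1}{N_l}\sum_i\sigma_i\,\hat{\ell}(f(\textbf{\textit{x}}_i),\bar{y}_i)]$, i.e. twice the Rademacher complexity of the weighted-loss class. I would then expand $\hat{\ell}$ as in (10), split the supremum over the $-(K-1)\omega_j\ell(f,j)$ term and the $\sum_{i}\omega_i\ell(f,i)$ term using subadditivity of the $\sup$ and of Rademacher averages, and factor out the weights with $\omega_j\le\omega_K$ and the base rates with $\pi_j\le\pi_K$. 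Applying Talagrand's contraction lemma with Lipschitz constant $L_\phi$ to each scalar composite $\ell(f(\cdot),j)$ reduces every piece to $\mathfrak{R}_{N_l}(\mathcal{F})$; aggregating the $(K-1)$-scaled term (carrying $\pi_K$) with the $K$ terms of the inner sum yields the factor $[\pi_K(K-1)+K]$ and hence the first term $2\omega_K L_\phi[\pi_K(K-1)+K]\mathfrak{R}_{N_l}(\mathcal{F})$. A union bound over the two displays completes the argument.

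The main obstacle I anticipate is the bookkeeping in the contraction step rather than any deep inequality: the contraction lemma is stated for a Lipschitz function of a single real argument, whereas $\hat{\ell}$ is a weighted sum of losses evaluated at the vector output $f(\textbf{\textit{x}})$ across all $K$ classes. Care is needed to decompose this into per-coordinate Lipschitz compositions, to apply contraction without disturbing the sign and weight structure, and to ensure the accumulated constants collapse to exactly $\pi_K(K-1)+K$ so that the same expression reappears inside both the Rademacher term and the definition of $I$. A secondary point worth double-checking is which crude bound on the weights ($\omega_j\le\omega_K$ versus $\sum_j\omega_j=1$) reproduces the stated constant, so I would keep both at hand and invoke whichever the intended form requires.
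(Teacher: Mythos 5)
Your proposal follows essentially the same route as the paper's own proof: McDiarmid's bounded-difference inequality with per-replacement constant $C_i \leqslant I/N_l$ yielding the $\sqrt{I^2\log\frac{4}{\delta}/(2N_l)}$ term, then symmetrization to bound the expected deviation by $2\mathfrak{R}_{N_l}(\hat{\ell}\circ\mathcal{F})$, and finally Lipschitz contraction with the bounds $\omega_j\leqslant\omega_K$, $\pi_j\leqslant\pi_K$ to extract the factor $2\omega_K L_\phi[\pi_K(K-1)+K]$. The bookkeeping concerns you flag (vector-output contraction, choice of weight bound) are handled in the paper at the same level of informality you propose, so your argument matches the intended proof in both structure and constants.
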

The proof is given in Appendix. Based on Lemma 1, the estimation error bound can be expressed as follows.
\begin{theorem}
	For any $ \delta > 0 $, with the probability at least $ 1- \delta / 2 $, we have 
	\begin{equation}
		\begin{split}
			R(\hat{f}) - R(f^{\ast})  \leqslant 4 \omega_{K} L_{\phi}  [ \pi_{K} (K-1) + K] \mathfrak{R}_{N_l}(\mathcal{F}) 
			+ \sqrt{\frac{2I^2log\frac{4}{\delta}}{N_l}},
		\end{split}
	\end{equation}
	where $ \hat{f} $ denotes the trained classifier, $ R(f^{\ast}) = \underset{f\in \mathcal{F}}{min}R(f) $.
\end{theorem}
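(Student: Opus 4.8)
The plan is to derive the excess-risk bound of Theorem 1 directly from the uniform deviation bound of Lemma 1 via the standard empirical-risk-minimization decomposition, using that $\hat{f}$ minimizes $\hat{R}$ over $\mathcal{F}$ while $f^{\ast}$ minimizes the population risk $R$. First I would split the excess risk into three pieces by inserting and subtracting $\hat{R}(\hat{f})$ and $\hat{R}(f^{\ast})$:
\begin{equation}
\begin{split}
R(\hat{f}) - R(f^{\ast}) &= \bigl[R(\hat{f}) - \hat{R}(\hat{f})\bigr] + \bigl[\hat{R}(\hat{f}) - \hat{R}(f^{\ast})\bigr] \\
&\quad + \bigl[\hat{R}(f^{\ast}) - R(f^{\ast})\bigr].
\end{split}
\end{equation}

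Next, since $\hat{f} = \arg\min_{f\in\mathcal{F}}\hat{R}(f)$ by definition of the trained classifier, the middle bracket is nonpositive, $\hat{R}(\hat{f}) - \hat{R}(f^{\ast}) \leqslant 0$, and may be dropped. Each of the two surviving brackets is bounded above by $\sup_{f\in\mathcal{F}}|\hat{R}(f) - R(f)|$, so that $R(\hat{f}) - R(f^{\ast}) \leqslant 2\sup_{f\in\mathcal{F}}|\hat{R}(f) - R(f)|$. I would then invoke Lemma 1 to control this supremum on the event of probability at least $1-\delta/2$; multiplying its bound by $2$ doubles the Rademacher term to $4\omega_{K} L_{\phi}[\pi_{K}(K-1)+K]\mathfrak{R}_{N_l}(\mathcal{F})$ and, absorbing the factor of $2$ under the root, turns $\sqrt{I^2\log(4/\delta)/(2N_l)}$ into $\sqrt{2I^2\log(4/\delta)/N_l}$, which is exactly the claimed bound.

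The argument is short because all the genuine analytic effort — the McDiarmid-type concentration and the Rademacher symmetrization with the Lipschitz contraction by $L_{\phi}$ — is already packaged inside Lemma 1. The one point I would be careful about is that a single application of Lemma 1 must simultaneously control the deviation at both $\hat{f}$ and $f^{\ast}$; this is automatic because Lemma 1 is stated uniformly over all of $\mathcal{F}$, so no union bound and no splitting of the confidence level $\delta$ is required, and the probability $1-\delta/2$ carries over unchanged into the theorem. The resulting rate is governed by $\mathfrak{R}_{N_l}(\mathcal{F})$, which decays as $\mathcal{O}(1/\sqrt{N_l})$ for standard hypothesis classes, giving the advertised $\mathcal{O}(1/\sqrt{n})$ convergence as the sample size grows.
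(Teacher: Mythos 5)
Your proposal is correct and follows essentially the same route as the paper: the same three-term decomposition obtained by inserting $\hat{R}(\hat{f})$ and $\hat{R}(f^{\ast})$, dropping the middle term, bounding the two remaining brackets by $2\sup_{f\in\mathcal{F}}|\hat{R}(f)-R(f)|$, and invoking Lemma 1 so that the doubled deviation term becomes $\sqrt{2I^2\log(4/\delta)/N_l}$. If anything, your justification for discarding $\hat{R}(\hat{f})-\hat{R}(f^{\ast})$ --- that it is nonpositive for every finite sample because $\hat{f}$ minimizes the empirical risk --- is the standard one and is cleaner than the paper's, which instead asserts the term equals zero as $N_l \rightarrow \infty$.
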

The proof is provided in the Appendix. Lemma 1 and Theorem 1 show that our method exists an estimation error bound. With the deep network hypothesis set $ \mathcal{F} $ fixed, we can establish that $\mathfrak{R}_{N_l}(\mathcal{F}) = \mathcal{O}(1/\sqrt{N_l})$. Consequently, as $ N_l \longrightarrow \infty $, the equation $R(\hat{f}) = R(f^{\ast})$ holds, which prove that our method could converge to the optimal solution.

\begin{figure*}[!htbp]
	\centering
	\begin{adjustbox}{center}
		\subfigure{
			\begin{minipage}[t]{0.4\linewidth}
				\centering
				\includegraphics[width=2.2in]{./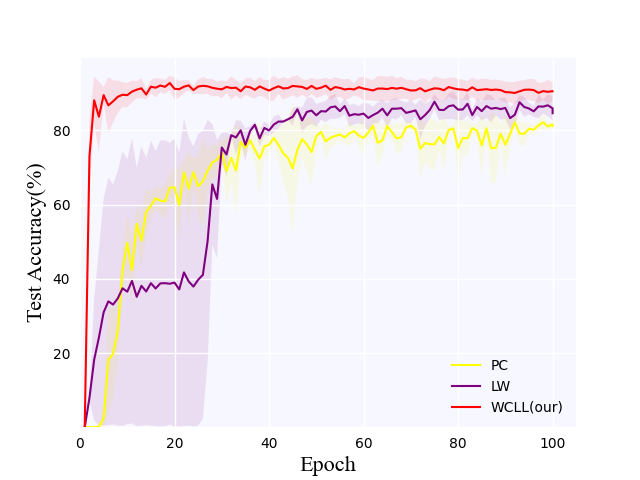}
				\caption*{(a) MNIST, p=2, \\ im-class = 0}
			\end{minipage}
		}%
		\subfigure{
			\begin{minipage}[t]{0.4\linewidth}
				\centering
				\includegraphics[width=2.2in]{./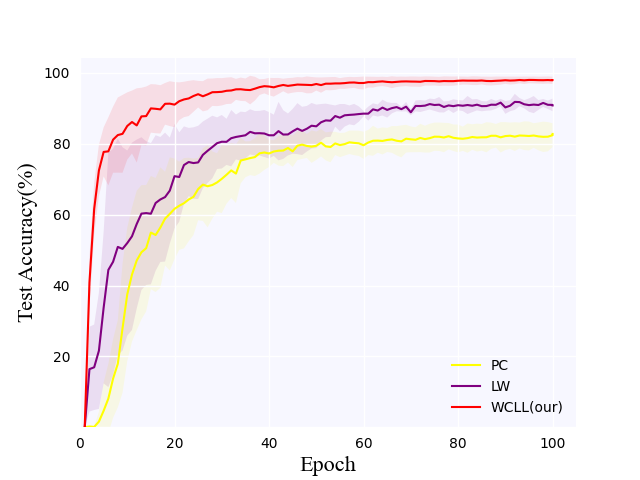}
				\caption*{(b) MNIST, p=2, \\ im-class = 1}
			\end{minipage}
		}%
		\subfigure{
			\begin{minipage}[t]{0.4\linewidth}
				\centering
				\includegraphics[width=2.2in]{./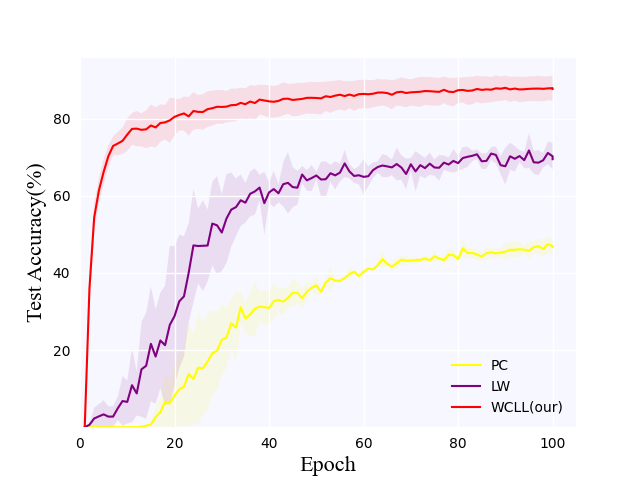}
				\caption*{(c) MNIST, p=2, \\ im-class = 2}
			\end{minipage}
		}%
	\end{adjustbox}

	\begin{adjustbox}{center}
		\subfigure{
			\begin{minipage}[t]{0.4\linewidth}
				\centering
				\includegraphics[width=2.2in]{./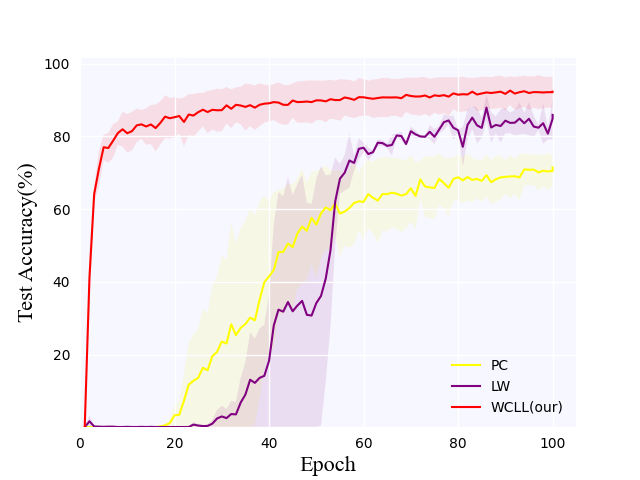}
				\caption*{(d) MNIST, p=2.5,\\im-class = 0}
			\end{minipage}
		}%
		\subfigure{
			\begin{minipage}[t]{0.4\linewidth}
				\centering
				\includegraphics[width=2.2in]{./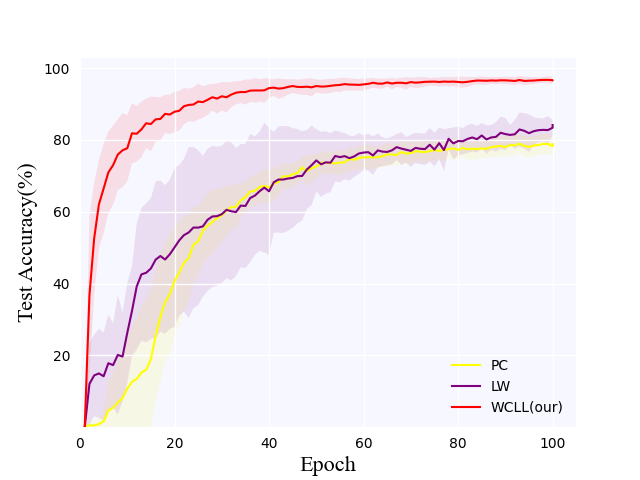}
				\caption*{(e) MNIST, p=2.5,\\im-class = 1}
			\end{minipage}
		}%
		\subfigure{
			\begin{minipage}[t]{0.4\linewidth}
				\centering
				\includegraphics[width=2.2in]{./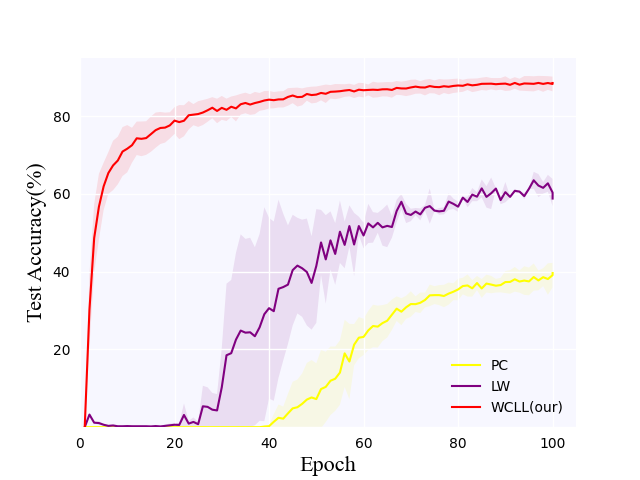}
				\caption*{(f) MNIST, p=2.5,\\im-class = 2}
			\end{minipage}
		}%
	\end{adjustbox}
	
	\begin{adjustbox}{center}
		\subfigure{
			\begin{minipage}[t]{0.4\linewidth}
				\centering
				\includegraphics[width=2.2in]{./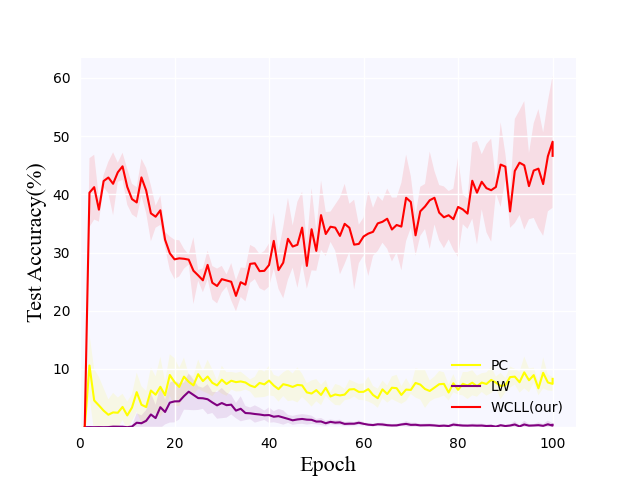}
				\caption*{(g) CIFAR-10, p=5, \\ im-class = 0}
			\end{minipage}
		}%
		\subfigure{
			\begin{minipage}[t]{0.4\linewidth}
				\centering
				\includegraphics[width=2.2in]{./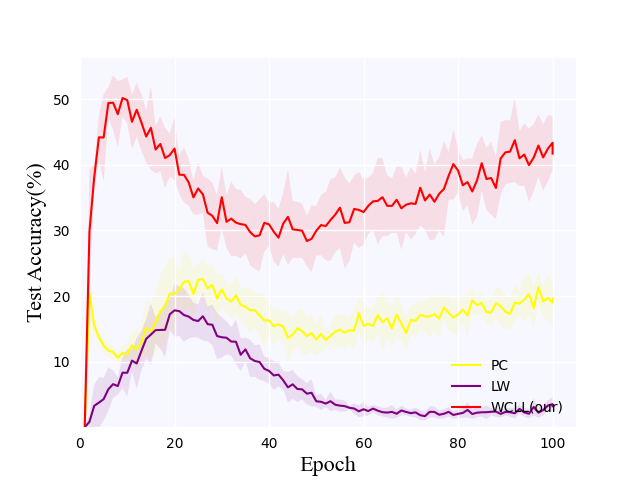}
				\caption*{(h) CIFAR-10, p=5, \\ im-class = 1}
			\end{minipage}
		}%
		\subfigure{
			\begin{minipage}[t]{0.4\linewidth}
				\centering
				\includegraphics[width=2.2in]{./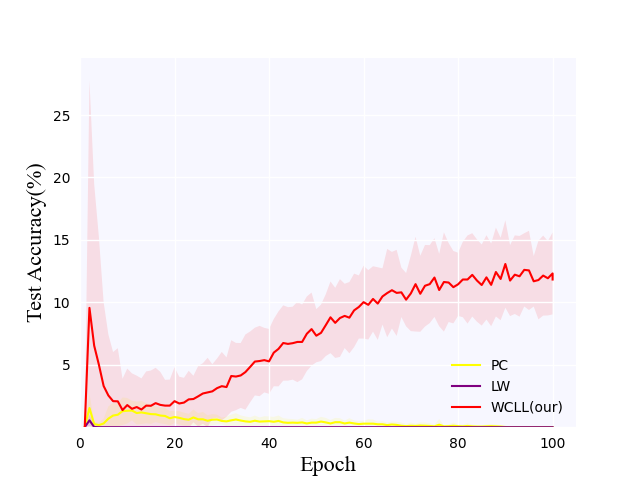}
				\caption*{(i) CIFAR-10, p=5, \\ im-class = 2}
			\end{minipage}
		}%
	\end{adjustbox}
	
	\caption{ Experiments results of test classification accuracy on the MNIST and CIFAR-10 datasets for 5 trials. The dark colors show the mean accuracy of the imbalanced class and the light colors show the standard deviation of the imbalanced class. Here, p denotes the denotes the number of radio between the class from $ \mathcal{T}_{maj} $ and $ \mathcal{T}_{min} $. Im-class refers to the imbalance class. The proposed method has the highest accuracy on imbalanced class.}
	\label{figure_3}
\end{figure*}
\section{Experiments}
This section conducts a comparative study to assess the performance of the proposed method in comparison to state-of-the-art approaches. The proposed method is short for WCLL, which incorporates a class-imbalanced complementary loss function defined in Eq.(12). The mini-batch size and number of epochs were set to 256 and 100, respectively. During the training phase, class-imbalanced complementarily labeled samples generated in the previous section were employed, while class-balanced complementarily labeled data was used for testing phase. All experiments are implemented using PyTorch.

\subsection{Experimental Setting}
\textbf{Datasets: } Consistent with previous studies on CLL \citep{cll_1, cll_3, cll_5, cll_7}, we evaluate our proposed method on widely-used benchmark datasets, including MNIST \citep{MNIST}, CIFAR-10 \citep{CIFAR10}, and Tiny-Imagenet. A summary of the key statistics for each dataset is presented below:
\begin{enumerate}
	\item [$ \bullet $] The MNIST dataset is a handwritten digits dataset, which is composed of 10 classes. Each sample is a $ 28 \times 28 $ grayscale image. The MNIST dataset has 60k training examples and 10k test examples.
	\item [$ \bullet $] The CIFAR-10 dataset has 10 classes of various objects: airplane, automobile, bird, cat, etc. This dataset has 50k training samples and 10k test samples and each sample is a colored image in $ 32 \times 32 \times 3 $ RGB formats.
	\item [$ \bullet $] The Tiny-Imagenet dataset has 200 classes. Each class has 500 training images, 50 validation images, and 50 test images.
\end{enumerate}

\textbf{Compared Methods: } We consider seven current state-of-the-art approaches, which include the following:
\begin{enumerate}
	\item [$ \bullet $] PC\citep{cll_1}: For the first time, \textit{Ishida et al.} proposed the concept of complementary labels and an unbiased estimator for learning from them. However, this method is only effective for specific loss functions.
	\item [$ \bullet $] NN\citep{cll_3}: A non-negative risk estimator to overcome overfitting issue in CLL.
	\item [$ \bullet $] FREE\citep{cll_3}:  An unbiased estimator for learning from complementary labels without limitations of models and loss functions.
	\item [$ \bullet $] LOG\citep{cll_5}: \textit{Feng et al.} proposed a multiple complementarylabel learning method. \textit{Feng et al.} treated multiple complementary labels as a whole, and LOG is an upper-bound surrogate loss function of MAE.
	\item [$ \bullet $] EXP \citep{cll_5}: EXP is also a multiple complementary label learningmethod proposed by \textit{Feng et al.} and EXP is the another upper-bound surrogate loss function of MAE.
	\item [$ \bullet $] LW\citep{cll_7}: \textit{Gao et al.} employed the highly confident predictions in the early stage of learning to boost the performance of succeeding updating of the model. They introduced a weighted loss term to minimize the loss value in CLL.
	\item [$ \bullet $] L-UW\citep{cll_7}: \textit{Gao et al.} proposed the discriminative model that directly estimates $ P(\bar{y} \mid \textbf{\textit{x}}) $ from the output of classifier. They define the prediction probability of complementary label as $ \bar{f}(\textbf{\textit{x}}) = 1 - f(\textbf{\textit{x}}) $.
	\item[$ \bullet $] NCLL\citep{cll_8}: \textit{Ishiguro et al.} analyzed the problem setting where complementary labels may be affected by label noise. They derived conditions for the loss function, ensuring that the learning algorithm remains robust against noise in complementary labels.
	\item[$ \bullet $] ML$\_$CLL\citep{cll_9}: \textit{Gao et al.} proposed an unbiased risk estimator with an estimation error bound to learn a multi-labeled classifier from complementary labeled data.
\end{enumerate}

\begin{table*}[!htbp]
	% increase table row spacing, adjust to taste
	\renewcommand{\arraystretch}{0.9}
	\caption{Test classification accuracy of total class for 5 trials with mean and standard deviation (mean $ \pm $ std $ \% $) for MNSIT, CIFAR-10 and Tiny-Imagenet dataset. Here, the training data at hand is single class imbalanced. The best performance is shown in bold.}
	\label{table_2}
	\begin{adjustbox}{center}
		\begin{tabular}{c|c|c|c|c|c|c}%c表示文本居中，c的个数是列数
			\toprule[1pt] 
			Dataset& P & Class & Method & \tabincell{c} {Imbalanced \\ class = 1} & \tabincell{c}{Imbalanced \\ class = 2}  & \tabincell{c}{Imbalanced \\ class = 3}  \\
			\midrule
			\multirow{14}{*}{MNIST} & \multirow{7}{*}{2} & \multirow{7}{*}{1 $ \sim $ 10}
			& PC \citep{cll_1} & 66.03 $\pm$ 2.98 & 69.73 $\pm$ 3.77 & 67.54 $\pm$ 1.64 \\
			& ~ & ~ & NN \citep{cll_3} & 34.54 $\pm$ 0.80 & 34.15 $\pm$ 1.04 & 35.78 $\pm$ 0.81 \\
			& ~ & ~ & FREE \citep{cll_3} & 59.13 $\pm$ 1.08 & 59.91 $\pm$ 1.41 & 59.88 $\pm$ 0.76 \\
			& ~ & ~ & EXP \citep{cll_5} & 30.93 $\pm$ 5.86 & 30.04 $\pm$ 2.98 & 30.65 $\pm$ 7.32 \\
			& ~ & ~ & LW \citep{cll_7} & 65.48 $\pm$ 4.49 & 70.14 $\pm$ 3.02 & 69.71 $\pm$ 2.05 \\
			& ~ & ~ & L-UW \citep{cll_7} & 66.08 $\pm$ 2.73 & 66.91 $\pm$ 4.25 & 67.04 $\pm$ 6.25 \\
			& ~ & ~ & WCLL (our) & \textbf{72.30 $\pm$ 1.23}  & \textbf{70.55 $\pm$ 1.44} & \textbf{71.07 $\pm$ 1.72} \\
			\cline{2-7}
			& \multirow{7}{*}{2.5} & \multirow{7}{*}{1 $ \sim $ 10}
			& PC \citep{cll_1}  & 65.60 $\pm$ 2.18 & 71.34 $\pm$ 0.83 & 66.68 $\pm$ 1.78 \\
			& ~ & ~ & NN \citep{cll_3}  & 30.50 $\pm$ 1.13 & 30.68 $\pm$ 1.24 & 31.76 $\pm$ 1.39 \\
			& ~ & ~ & FREE \citep{cll_3} & 58.35 $\pm$ 1.64 & 58.53 $\pm$ 1.68 & 59.16 $\pm$ 1.64 \\
			& ~ & ~ & EXP \citep{cll_5} & 30.41 $\pm$ 6.65 & 28.09 $\pm$ 4.80 & 29.45 $\pm$ 3.71 \\
			& ~ & ~ & LW \citep{cll_7}   & 63.10 $\pm$ 3.81 & 64.49 $\pm$ 5.64 & 57.50 $\pm$ 5.20 \\
			& ~ & ~ & L-UW \citep{cll_7} & 66.91 $\pm$ 4.56 & 69.19 $\pm$ 5.26 & 68.50 $\pm$ 3.05 \\
			& ~ & ~ & WCLL (our) & \textbf{71.40 $\pm$ 1.62} & \textbf{72.13 $\pm$ 1.33} & \textbf{73.01 $\pm$ 1.05} \\
			\midrule
			\multirow{10}{*}{CIFAR-10} & \multirow{10}{*}{5} & \multirow{10}{*}{1 $ \sim $ 5}
			& PC \citep{cll_1}  & 54.11 $\pm$ 0.61 & 54.97 $\pm$ 1.01 & 59.08 $\pm$ 0.33 \\
			& ~ & ~ & NN \citep{cll_3}  & 47.67 $\pm$ 0.63 & 44.84 $\pm$ 1.12 & 55.07 $\pm$ 0.88 \\
			& ~ & ~ & FREE \citep{cll_3} & 48.15 $\pm$ 2.70 & 49.43 $\pm$ 1.80 & 54.64 $\pm$ 1.32 \\
			& ~ & ~ & LOG \citep{cll_5} & 57.50 $\pm$ 0.95 & 57.16 $\pm$ 0.54 & 62.74 $\pm$ 0.27 \\
			& ~ & ~ & EXP \citep{cll_5} & 57.70 $\pm$ 0.99 & 57.06 $\pm$ 0.56 & \textbf{62.85 $\pm$ 0.17} \\
			& ~ & ~ & LW \citep{cll_7}  & 53.40 $\pm$ 0.96 & 53.61 $\pm$ 1.15 & 59.60 $\pm$ 0.26 \\
			& ~ & ~ & L-UW \citep{cll_7}  & 56.59 $\pm$ 0.82 & 56.42 $\pm$ 0.89 & 61.42 $\pm$ 0.17 \\
			& ~ & ~ & NCLL \citep{cll_8} & 49.16 $\pm$ 1.67 & 43.32 $\pm$ 0.18 & 57.34 $\pm$ 0.94 \\
			& ~ & ~ & ML$\_$CLL \citep{cll_9} & 55.43 $\pm$ 0.39 & \textbf{59.44 $\pm$ 0.42} & 61.84 $\pm$ 0.10 \\
			& ~ & ~ & WCLL (our) & \textbf{58.02 $\pm$ 0.70} & 56.54 $\pm$ 0.74 & 58.88 $\pm$ 0.41 \\
			\midrule
			\multirow{10}{*}{\tabincell{c}{Tiny-\\Imagenet}} & \multirow{10}{*}{10} & \multirow{10}{*}{1 $ \sim $ 5}
			& PC \citep{cll_1}  & 51.22 $\pm$ 1.02 & 50.66 $\pm$ 2.17 & 55.41 $\pm$ 1.86 \\
			& ~ & ~ & NN \citep{cll_3}  & 43.69 $\pm$ 2.30 & 46.11 $\pm$ 0.87 & 48.42 $\pm$ 1.85 \\
			& ~ & ~ & FREE \citep{cll_3} & 49.40 $\pm$ 2.12 & 48.44 $\pm$ 2.71 & 51.54 $\pm$ 3.88 \\
			& ~ & ~ & LOG \citep{cll_5} & 49.43 $\pm$ 3.08 & 52.84 $\pm$ 1.55 & 59.76 $\pm$ 1.17 \\
			& ~ & ~ & EXP \citep{cll_5} & 48.78 $\pm$ 3.16 & 52.63 $\pm$ 2.27 & \textbf{60.20 $\pm$ 0.91} \\
			& ~ & ~ & LW \citep{cll_7}  & 51.19 $\pm$ 2.47 & 51.84 $\pm$ 1.93 & 56.63 $\pm$ 1.34 \\
			& ~ & ~ & L-UW \citep{cll_7}  & 51.45 $\pm$ 1.27 & 52.86 $\pm$ 1.64 & 58.19 $\pm$ 1.33 \\
			& ~ & ~ & NCLL \citep{cll_8} & 43.29 $\pm$ 0.20 & 43.60 $\pm$ 0.71 & 48.91 $\pm$ 0.80 \\
			& ~ & ~ & ML$\_$CLL \citep{cll_9} & 43.99 $\pm$ 0.30 & 48.40 $\pm$ 0.70 & 48.80 $\pm$ 0.30 \\
			& ~ & ~ & WCLL (our) & \textbf{52.55 $\pm$ 1.33} & \textbf{53.15 $\pm$ 1.63} & 57.31 $\pm$ 1.30 \\
			\bottomrule[1pt]
		\end{tabular}
	\end{adjustbox}
\end{table*}
\begin{table*}[htb]
	% increase table row spacing, adjust to taste
	\renewcommand{\arraystretch}{0.9}
	\caption{Test classification accuracy (mean $ \pm $ std $ \% $) of total class for 10 trials. Here, the training data at hand is two class imbalanced. The best performance is shown in bold.}
	\label{table_3}
	\begin{adjustbox}{center}
		\begin{tabular}{c|c|c|c|c|c|c}%c表示文本居中，c的个数是列数
			\toprule [1pt] 
			Dataset& P & Class & Method & \tabincell{c}{Imbalanced \\ class = 1, 2} & \tabincell{c}{Imbalanced \\ class = 1, 3} & \tabincell{c}{Imbalanced \\ class = 2, 3}  \\
			\midrule
			\multirow{7}{*}{MNIST} & \multirow{7}{*}{2.5} & \multirow{7}{*}{1 $ \sim $ 10}
			& PC \citep{cll_1} & 63.34 $\pm$ 2.16 & 57.51 $\pm$ 1.62 & 64.81 $\pm$ 1.16 \\
			& ~ & ~ & NN \citep{cll_3} & 23.34 $\pm$ 0.77 & 24.58 $\pm$ 1.27 & 23.72 $\pm$ 1.10 \\
			& ~ & ~ & FREE \citep{cll_3} & 48.45 $\pm$ 1.32 & 48.50 $\pm$ 1.49 & 48.33 $\pm$ 1.11 \\
			& ~ & ~ & EXP \citep{cll_5} & 25.76 $\pm$ 3.62 & 29.23 $\pm$ 5.72 & 28.29 $\pm$ 1.45 \\
			& ~ & ~ & LW \citep{cll_7} & 64.32 $\pm$ 2.00 & 62.83 $\pm$ 1.34 & 68.83 $\pm$ 2.48 \\
			& ~ & ~ & L-UW \citep{cll_7} & 64.95 $\pm$ 1.38 & 61.25 $\pm$ 4.06 & 65.34 $\pm$ 3.57 \\
			& ~ & ~ & WCLL (our) & \textbf{68.90 $\pm$ 0.91}  & \textbf{68.76 $\pm$ 1.82} & \textbf{69.78 $\pm$ 1.09} \\
			\midrule
			\multirow{10}{*}{CIFAR-10} & \multirow{10}{*}{5} & \multirow{10}{*}{1 $ \sim $ 5}
			& PC  \citep{cll_1} & 43.53 $\pm$ 0.87 & 49.82 $\pm$ 0.86 & 49.85 $\pm$ 0.93 \\
			& ~ & ~ & NN  \citep{cll_3} & 32.19 $\pm$ 0.43 & 43.73 $\pm$ 0.72 & 41.62 $\pm$ 1.03 \\
			& ~ & ~ & FREE  \citep{cll_3} & 42.02 $\pm$ 2.06 & 45.71 $\pm$ 1.42 & 44.06 $\pm$ 1.12 \\
			& ~ & ~ & LOG  \citep{cll_5} & 47.07 $\pm$ 1.35 & 52.48 $\pm$ 1.05 & 53.60 $\pm$ 0.52 \\
			& ~ & ~ & EXP  \citep{cll_5} & 46.77 $\pm$ 1.97 & 52.16 $\pm$ 1.35 & 52.68 $\pm$ 1.65 \\
			& ~ & ~ & LW  \citep{cll_7} & 42.70 $\pm$ 0.99 & 48.45 $\pm$ 1.05 & 47.81 $\pm$ 1.32 \\
			& ~ & ~ & L-UW  \citep{cll_7} & 43.86 $\pm$ 1.95 & 50.53 $\pm$ 0.99 & 50.18 $\pm$ 1.20 \\
			& ~ & ~ & NCLL \citep{cll_8} & 29.90 $\pm$ 1.70 & 44.66 $\pm$ 0.12 & 41.83 $\pm$ 0.05 \\
			& ~ & ~ & ML$\_$CLL \citep{cll_9} & 43.98 $\pm$ 0.96 & 51.66 $\pm$ 0.10 & 51.94 $\pm$ 0.34 \\
			& ~ & ~ & WCLL (our) & \textbf{50.99 $\pm$ 1.00} & \textbf{54.82 $\pm$ 0.70} & \textbf{53.80 $\pm$ 0.80} \\
			\bottomrule[1pt]
		\end{tabular}
	\end{adjustbox}
\end{table*}

\subsection{Comparison of Single Class Imbalance}
\textbf{Setup: } For MNIST, we conduct experiments to validate the superiority of our method by introducing imbalanced treatment on class 0, class 1, and class 2. Specifically, we considered three scenarios: 1) $ \mathcal{T}_{min}=\{0\}, \mathcal{T}_{maj} = \{1, 2, \ldots, K\} $; 2) $ \mathcal{T}_{min}=\{1\}, \mathcal{T}_{maj} = \{0, 2, \ldots, K\} $; and 3) $ \mathcal{T}_{min}=\{2\}, \mathcal{T}_{maj} = \{0, 1, 3, \ldots, K\} $. A linear model is trained on this dataset with imbalanced proportion $ p $ set to 2 and 2.5. We fix weight decay to $ 1e-4 $ and select the learning rate from $ \{5e-4, 1e-4, 5e-5, 5e-6\} $ for Adam optimizer. 

For CIFAR-10 and Tiny-Imagenet, we set the value of $ K $ to 5 and utilized only the data from the first five classes of the original training set, as well as the test set. We conduct imbalanced treatment on class 0, class 1, and class 2 individually.  Th	e proposed method employed a simple CNN network comprising a 9-layer convolutional network architecture. Each convolutional layer was followed by a Batch Normalization layer and a ReLU layer, with max pooling and drop-out layers applied after every three convolutions. The network concluded with a fully connected layer. The weight decay was set to $ 1e-4 $, and the learning rate was chosen from the set $ \{5e-5, 5e-6\} $.

\textbf{Results:} In Figure \ref{figure_3}, we present the results of the imbalanced class for PC, LW, and the proposed method. As depicted in Figure \ref{figure_3}, our method demonstrates a significantly higher average test classification accuracy compared to the PC and LW methods. Additionally, our method exhibits a smaller standard deviation in test classification accuracy, suggesting its stability compared to other methods. Notably, the proposed method maintains consistent performance when the imbalance proportion changes from 2 to 2.5 on the MNIST dataset, while the accuracy of the NN and PC methods slightly decreases on the imbalanced class.

Table \ref{table_2} presents the classification accuracy for all methods across all classes, based on five trials conducted on various datasets. In view of the suboptimal performance exhibited by NCLL and ML$\_$CLL algorithms when applied to the MNIST dataset, this study exclusively presents their outcomes exclusively on the CIFAR10 and Tiny-imagenet datasets. The proposed method demonstrates consistently high accuracy across all classes in the three datasets, which progressively increase in complexity. However, a slight decrease in accuracy is observed in case 2 and 3 of the CIFAR-10 dataset. We hypothesize that this decrease may be attributed to the presence of a negative term inherent in our method, which leads to overfitting. To alleviate this issue, we plan to conduct further investigations in the future to enhance and refine our model.
\begin{table*}[!htbp]
	% increase table row spacing, adjust to taste
	\renewcommand{\arraystretch}{0.9}
	\caption{Test classification accuracy (mean $ \pm $ std $ \% $) of total class for 10 trials with more imbalance classes. The best performance is shown in bold.}
	\label{table_4}
	\begin{adjustbox}{center}
		\begin{tabular}{c|c|c|c|c}%c表示文本居中，c的个数是列数
			\toprule [1pt] 
			\multirow{4}{*}{Method} & \multicolumn{2}{c|}{MNIST} & \multicolumn{2}{c}{CIFAR-10}\\
			\cline{2-5}
			~ & p=5 & p=5 & p=5 & p=10 \\
			~ & class=1$ \sim $10 & class=1$ \sim $10 & class=1$ \sim $5 & class=1$ \sim $5 \\
			~ & im-class=1,3,5,7,9 & im-class=2,4,6,8,10 & im-class=1,2,3,4 & im-class=1,2,3,4 \\
			\midrule
			PC \citep{cll_1} & 36.80 $\pm$ 1.18 & 36.20 $\pm$ 0.98 & 44.03 $\pm$ 0.58 & 36.27 $\pm$ 0.80 \\
			NN \citep{cll_3} & 14.24 $\pm$ 1.52  & 14.14 $\pm$ 1.16 & 40.89 $\pm$ 2.17 & 33.47 $\pm$ 3.22 \\
			FREE \citep{cll_3} & 20.07 $\pm$ 2.28 & 19.44 $\pm$ 1.38 & 46.50 $\pm$ 1.92 & 40.02 $\pm$ 1.66 \\
			LOG \citep{cll_5} & 9.80 $\pm$ 0.00 & 9.80 $\pm$ 0.00 & 47.58 $\pm$ 1.01 & 39.57 $\pm$ 0.89 \\
			EXP \citep{cll_5} & 16.40 $\pm$ 4.56 & 15.82 $\pm$ 4.49 & 47.77 $\pm$ 1.05 & 40.11 $\pm$ 0.69 \\
			LW \citep{cll_7} & 34.87 $\pm$ 5.64 & 36.20 $\pm$ 5.04 & 47.43 $\pm$ 0.90 & 38.77 $\pm$ 0.88 \\
			L-UW \citep{cll_7} & 31.72 $\pm$ 6.88 & 36.18 $\pm$ 5.64 & 48.16 $\pm$ 0.85 & 39.67 $\pm$ 1.11 \\
			NCLL \citep{cll_8} & -- & -- & 40.83 $\pm$ 0.35 & 34.89 $\pm$ 0.81 \\
			ML$\_$CLL \citep{cll_9} & -- & -- & 41.58 $\pm$ 0.60 & 33.40 $\pm$ 0.62 \\ 
			\midrule
			WCLL (our) & \textbf{55.22 $\pm$ 2.26}  & \textbf{53.61 $\pm$ 2.37} & \textbf{48.34 $\pm$ 0.61} & \textbf{40.38 $\pm$ 0.85} \\
			\bottomrule[1pt]
		\end{tabular}
	\end{adjustbox}
\end{table*}
\subsection{Comparison of Multi-Class Imbalance}
\textbf{Setup:} In this experiment, we assess the performance of the proposed method on multi-class imbalanced data. Specifically, for the MNIST dataset, we conducted imbalanced treatment on class 0 and 1, class 0 and 2, and class 1 and 2, denoted as: 1) $ \mathcal{T}_{min}=\{0, 1\}, \mathcal{T}_{maj} = \{2, \ldots, K\} $; 2) $ \mathcal{T}_{min}=\{0, 2\}, \mathcal{T}_{maj} = \{1, 3, \ldots, K\} $; 3) $ \mathcal{T}_{min}=\{1, 2\}, \mathcal{T}_{maj} = \{0, 3, \ldots, K\} $. In each setting, we conducted 10 trials with an imbalanced proportion of 2.5. We employed a linear model with weight-decay set to $1e-4$ and learning rate selected from $\{5e-5,5e-6\}$. 

For CIFAR-10, we utilize only the first five classes of the training set and the test set. Imbalanced treatment was applied to class 0 and 1, class 0 and 2, and class 1 and 2, with imbalanced proportion set to 5 and 10. The same model utilized in the previous section was employed in this experiment.

\textbf{Results:} Table \ref{table_3} reports the test classification accuracy for 10 trials on two multi-class imbalanced datasets. Since compared methods exhibit poor performance on imbalanced classes, we focus our experiment on the test classification accuracy across all classes. The proposed method demonstrates a significant improvement in accuracy when multiple classes are imbalanced compared to other methods. 
\subsection{Additional Experiments}
\begin{table*}[htbp]
	% increase table row spacing, adjust to taste
	\renewcommand{\arraystretch}{1}
	\caption{Test classification accuracy (mean $ \pm $ std $ \% $) of total class for 10 trials on MNIST datatset. The best performance is shown in bold.}
	\label{table_5}
	\begin{adjustbox}{center}
		\begin{tabular}{c|c|c|c|c|c|c|c}%c表示文本居中，c的个数是列数
			\toprule [1pt] 
			Random& Bias=0.05 & Bias=0.1 & Bias=0.15 & Bias=0.2 & Bias=0.3 & Bias=0.5 & WCLL (our) \\
			\midrule
			\tabincell{c}{14.86 \\ $\pm$ 3.55} & \tabincell{c}{54.66 \\ $\pm$ 0.92} & \tabincell{c}{54.94 \\ $\pm$ 1.05} & \tabincell{c}{54.45 \\ $\pm$ 1.12} & \tabincell{c}{54.04 \\ $\pm$ 1.09} & \tabincell{c}{53.47 \\ $\pm$ 1.17} & \tabincell{c}{52.78 \\ $\pm$ 1.20} & \textbf{\tabincell{c}{72.30 \\ $\pm$ 1.23}} \\
			\bottomrule[1pt]
		\end{tabular}
	\end{adjustbox}
\end{table*}
\begin{table*}[htbp]
	% increase table row spacing, adjust to taste
	\renewcommand{\arraystretch}{1}
	\caption{Test classification accuracy (mean $ \pm $ std $ \% $) of total class for 10 trials on MNIST datatset. The best performance is shown in bold.}
	\label{table_6}
	\begin{adjustbox}{center}
		\begin{tabular}{c|c|c|c}%c表示文本居中，c的个数是列数
			\toprule [1pt] 
			P & Under-sampling & Over-sampling & WCLL (our) \\
			\midrule
			2 & 49.56 $\pm$ 1.75 & 46.08 $\pm$ 2.56 & \textbf{72.30 $\pm$ 1.23} \\
			\midrule
			2.5 & 48.66 $\pm$ 1.26 & 46.02 $\pm$ 2.42 & \textbf{71.40 $\pm$ 1.62} \\
			\midrule
			5 & 44.38 $\pm$ 2.78 & 45.87 $\pm$ 2.57 & \textbf{68.71 $\pm$ 1.61} \\
			\bottomrule[1pt]
		\end{tabular}
	\end{adjustbox}
\end{table*}

\textbf{Extreme Case:} To further demonstrate the effectiveness of our method on imbalanced data, we test our method's performance on data with more imbalanced classes. For MNIST, we tested two cases: 1) $ K = 10, T_{min} = \{1, 3, 5, 7, 9\} $, and 2) $ K = 10, T_{min} = \{2, 4, 6, 8, 10\} $. For CIFAR-10, we tested two cases: 1) $ K = 5, p = 5, T_{min} = \{1, 2, 3, 4\} $, and 2) $ K = 5, p = 10, T_{min} = \{1, 2, 3, 4\} $. The results are shown in Table \ref{table_4}, where we report the test classification accuracy of the total class. As demonstrated by the results, our method maintains high accuracy even in more extreme cases of imbalanced data, which further proves the superiority of our model.

\textbf{Issue of Assumption:} To verify our assumption on class imbalance of complementary labels, we conduct an additional experiment where we compare the results when $ \pi_j $ is not satisfied. We develop two distinct weighting strategies to alleviate this scenario. The first approach involves randomly generating a set of weights to the loss function. The second approach involves adding a bias to the proposed method, specifically, $ \pi_j' = \pi_j + bias, $ where the bias value was set to 0.05, 0.1, 0.15, 0.2, 0.3, and 0.5. The results of the test accuracy for 10 trials using the two different approaches are presented in Table \ref{table_5}. The performance is noticeably lower when $ \pi_j $ was not met, confirming the significance of our assumption on class imbalance of complementary labels.

\begin{table*}[!htbp]
	% increase table row spacing, adjust to taste
	\renewcommand{\arraystretch}{0.9}
	\caption{Test classification accuracy of DDSM dataset. The best performance is shown in bold.}
	\label{table_7}
	\begin{adjustbox}{center}
		\begin{tabular}{c|c|c|c}%c表示文本居中，c的个数是列数
			\toprule [1pt] 
			{Method} & total accuracy & benign accuracy & cancer accuracy \\
			\midrule
			PC \citep{cll_1} & 34.62 & 4.23 & 25.38 \\
			NN \citep{cll_3} & 36.54 & 0.39 & 12.31 \\
			FREE \citep{cll_3} & 45.00 & 33.46 & 3.46 \\
			LOG \citep{cll_5} & 40.26 & 3.85 & 25.77 \\
			EXP \citep{cll_5} & 38.85 & -- & 28.08 \\
			LW \citep{cll_7} & 38.85 & -- & 28.85 \\
			L-UW \citep{cll_7} & 39.87 & 0.38 & 33.46 \\
			NCLL \citep{cll_8} & 36.41 & -- & 10.77 \\
			ML$\_$CLL \citep{cll_9} & 37.32 & -- & 15.77 \\ 
			\midrule
			WCLL (our) & \textbf{57.56} & \textbf{76.54} & \textbf{34.23} \\
			\bottomrule[1pt]
		\end{tabular}
	\end{adjustbox}
\end{table*}
\textbf{Issue of Samping:} We conducted additional experiments to compare the effectiveness of reweighting and resampling techniques on the same datasets. For undersampling, we removed the excess majority class samples and balanced them with the number of minority class samples. Conversely, oversampling was implemented by increasing the samples of the minority class to match the majority class. Table \ref{table_6} presents the comparative results of these two techniques on the MNIST dataset. As depicted in the table \ref{table_6}, the proposed method outperforms the under-sampling and over-sampling techniques by a significant margin.

\textbf{Real-world Scenario:} The performance results on the DDSM dataset are presented in Table \ref{table_7}, showcasing the outcome on a real-world dataset. The proposed method exhibits superior performance on the DDSM dataset, surpassing all baseline methods across all classes with a remarkable accuracy improvement of $17.56\%$.  Notably, our method also demonstrates superior performance on the two unbalanced classes, namely benign and cancer accuracy. Specifically, our proposed method achieves a $43.08\%$ increase in benign accuracy and a $0.77\%$ increase in cancer accuracy compared to the best baseline method. This is a significant improvement, as all baseline methods failed to improve accuracy on both classes. Our findings clearly highlight the superiority of our proposed method.

\section{Conclusion}
Class imbalance is a common problem in real-world datasets, yet existing CLL approaches have not sufficiently alleviated this issue. Therefore, we propose a novel CLL problem setting specifically for class-imbalanced data. Our proposed approach is a weighted CLL method that learns from class-imbalanced data and has been theoretically proven to converge to the optimal solution. Our method not only leverages complementary labels to train a classifier, but also effectively alleviates the issue of class imbalance. The experimental results demonstrate that our method also outperforms other state-of-the-art methods when dealing with multi-class imbalanced samples.

It is noteworthy that the proposed method suffers from overfitting issue due to the negative problem. In the future, we consider designing a regularization penalty to alleviate this issue and improve the model's performance. Furtherore, the complementary labels were chosen at random, which also appears in other weakly supervised learning approaches. Taking the initiative to select some complementary labels that can enhance the model's performance would be interesting. In addition, dynamically weighted loss has been shown to be effective in dealing with class imbalance \citep{weighted_loss_7,weighted_loss_10}, and it would be valuable to investigate how to create a dynamically weighted strategy for data with class-imbalanced complementary labels.

\section*{Acknowledgement}
This work is supported by the National Natural Science Foundation of China (No.61976217), the Fundamental Research Funds for the Central Universities (No.2019XKQYMS87), the Science and Technology Planning Project of Xuzhou (No.KC21193)
%\newcommand{\newblock}{}
%\bibliographystyle{unsrtnat}
%\bibliographystyle{agsm}
%\bibliography{infer}

%% main text
%\section{}
%\label{}

%% The Appendices part is started with the command \appendix;
%% appendix sections are then done as normal sections
%% \appendix

%% \section{}
%% \label{}

%% If you have bibdatabase file and want bibtex to generate the
%% bibitems, please use
%%
\bibliographystyle{elsarticle-harv} 
\bibliography{infer}

%% else use the following coding to input the bibitems directly in the
%% TeX file.

%\begin{thebibliography}{00}

%% \bibitem[Author(year)]{label}
%% Text of bibliographic item

%\bibitem[ ()]{}

%\end{thebibliography}

\appendix
\section{The Proof of Lemma 1}
\begin{lemma}
	For any $ \delta > 0 $, with the probability at least $ 1- \delta / 2 $, we have 
	\begin{equation}
		\begin{split}
			\mathop{sup}_{f\in\mathcal{F}} | \hat{R}(f) - R(f) | \leqslant 2 \omega_{K} L_{\phi}  [ \pi_{K} (K-1) + K] \mathfrak{R}_{N_l}(\mathcal{F}) 
			+ \sqrt{\frac{I^2log\frac{4}{\delta}}{2N_l}}.
		\end{split}
		\nonumber
	\end{equation}
\end{lemma}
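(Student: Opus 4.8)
The plan is to follow the standard three-part recipe for uniform deviation bounds: a bounded-differences (McDiarmid) concentration step, a symmetrization step, and a contraction step. Write $\Phi(\bar{\mathcal{D}}_L) = \sup_{f\in\mathcal{F}}|\hat{R}(f) - R(f)|$, where only $\hat{R}(f)$ depends on the sample. The first task is to verify the bounded-differences property: replacing a single training instance $(\textbf{\textit{x}}_i,\bar{y}_i)$ alters exactly one summand of $\hat{R}$. Using $\ell[f(\textbf{\textit{x}}),i]\leqslant L_f$, $\pi_i\leqslant\pi_K$, and $\omega_j\leqslant\omega_K$, each per-sample term $-(K-1)\omega_z\ell(f(\textbf{\textit{x}}_i),z)+\sum_{j=1}^K\omega_j\ell(f(\textbf{\textit{x}}_i),j)$ is bounded in magnitude by $\omega_K L_f[\pi_K(K-1)+K]$ once the class weight $\pi_z$ and the normalization $\sum_j\pi_j=1$ are accounted for (the $\pi_K(K-1)$ piece comes from the class-weighted negative term, the $+K$ piece from the un-reweighted sum). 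Hence a single swap moves $\hat{R}$ by at most $I/N_l$ with $I=2\omega_K L_f[\pi_K(K-1)+K]$, giving $c_i=I/N_l$ for every coordinate.

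With the bounded differences in hand, I would apply McDiarmid's inequality to $\Phi$ and to $-\Phi$, combined by a union bound over the two one-sided events each set to failure probability $\delta/4$, to conclude that with probability at least $1-\delta/2$,
\begin{equation}
	\Phi(\bar{\mathcal{D}}_L) \leqslant \mathbb{E}[\Phi(\bar{\mathcal{D}}_L)] + \sqrt{\frac{I^2\log\frac{4}{\delta}}{2N_l}}, \nonumber
\end{equation}
since $\sum_{i=1}^{N_l}c_i^2 = I^2/N_l$.

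It remains to control $\mathbb{E}[\Phi]$. I would introduce a ghost sample and Rademacher variables to obtain the symmetrization bound $\mathbb{E}[\Phi]\leqslant 2\mathfrak{R}_{N_l}(\hat{\ell}\circ\mathcal{F})$, where $\hat{\ell}\circ\mathcal{F}$ is the class of composite complementary losses. The decisive step is then to peel off the loss: because $\hat{\ell}$ is built from the base loss $\ell=\phi(\cdot)$ with Lipschitz constant $L_\phi$, subadditivity of Rademacher complexity over the sum of the $-(K-1)\omega_z(\cdot)$ term and the $K$ terms of $\sum_j\omega_j(\cdot)$, together with Talagrand's contraction lemma applied coordinatewise, yields $\mathfrak{R}_{N_l}(\hat{\ell}\circ\mathcal{F})\leqslant \omega_K L_\phi[\pi_K(K-1)+K]\,\mathfrak{R}_{N_l}(\mathcal{F})$. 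Substituting this into the concentration bound and collecting constants produces the claimed inequality.

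I expect the contraction step to be the main obstacle. The composite loss is a weighted sum of $K+1$ Lipschitz transformations of the scores $f(\textbf{\textit{x}})$, and care is needed to distribute the Rademacher complexity across these terms without losing the correct dependence on $K$, $\pi_K$, and $\omega_K$. Keeping the Lipschitz constant $L_\phi$ (rather than a crude bound by $L_f$) throughout, and ensuring that the per-term contraction factors multiply correctly to $\omega_K L_\phi[\pi_K(K-1)+K]$, is where the bookkeeping must be done with the most attention.
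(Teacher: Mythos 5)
Your proposal takes essentially the same route as the paper's own proof: McDiarmid's inequality with bounded differences $c_i = I/N_l$, $I = 2\omega_K L_f[\pi_K(K-1)+K]$, then symmetrization to $2\mathfrak{R}_{N_l}(\hat{\ell}\circ\mathcal{F})$, then Lipschitz contraction yielding the factor $2\omega_K L_\phi[\pi_K(K-1)+K]\mathfrak{R}_{N_l}(\mathcal{F})$, with identical constants throughout. If anything, your version is slightly more careful than the paper's, since you make explicit the two-sided application of McDiarmid (to $\Phi$ and $-\Phi$, each at level $\delta/4$), which is exactly where the $\log\frac{4}{\delta}$ term originates.
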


\begin{proof}\let\qed\relax
	Let According to McDiarmid inequality, for $ \forall \varepsilon > 0 $ we have 
	\begin{equation}
		P(|R(f) - \hat{R}(f) - \mathbb{E}_{\bar{\mathcal{D}}_L}[R(f) - \hat{R}(f)] | \geqslant \varepsilon) \leqslant 2 exp(\frac{-2\varepsilon^2}{\sum_{N_l}C^2_i})
		\nonumber
	\end{equation}
	where $ \bar{\mathcal{D}}_L = \{ (x_i, \bar{y}_i) \}_{i=1}^{N_l} $. If $ x_i $ in $ \bar{\mathcal{D}}_L $ is replaced with $ x'_i $, we have $ \bar{\mathcal{D}}'_L $ and
	\begin{equation}
		\begin{aligned}
			C_i &= \mathop{sup}_{f\in\mathcal{F}} |[\hat{R}(f)-R(f)] -  [\hat{R}'(f) - R(f)] | \\
			& = \mathop{sup}_{f\in\mathcal{F}} |\hat{R}(f)-\hat{R}'(f) | \\
			& = \mathop{sup}_{f\in\mathcal{F}}  \frac{1}{N_l} \{ [\pi_z(K-1)\omega_z[f(x_i, z)]  + \sum_{j=1}^{K} \omega_j[f(x_i, j)]] - [\pi_z(K-1)\omega_z[f(x'_i, z)]  + \sum_{j=1}^{K} \omega_j[f(x'_i, j)]]  \} \\ 
			& \leqslant  \frac{1}{N_l}\{2\omega_KL_f[\pi_K(K-1) + K]\} = \frac{I}{N_l}
		\end{aligned}
		\nonumber
	\end{equation}
	where $ \pi_z \leqslant \pi_K, \omega_z \leqslant \omega_K$, $ L[f(x_i, z)] \leqslant L_f $, $ I = 2\omega_KL_f[\pi_K(K-1) + K] $, and $ \hat{R}'(f) = \mathbb{E}_{\bar{\mathcal{D}}'_L} $.
	
	Let $ 1 - P(|R(f) - \hat{R}(f) - \mathbb{E}_{D_l}[R(f) - \hat{R}(f)] | \leqslant \varepsilon) \leqslant \frac{\delta}{2} $, we have
	\begin{equation}
		2 exp(\frac{-2\varepsilon^2}{\sum_{N_l}C^2_i}) =  \frac{\delta}{2} \Rightarrow \varepsilon = \sqrt{\frac{I^2log\frac{4}{\delta}}{2N_l}}
		\nonumber
	\end{equation}
	Then, we can obtain the
	\begin{equation}
		| \hat{R}(f) - R(f) - \mathbb{E}_{\bar{\mathcal{D}}_L}[\hat{R}(f) - R(f)]| \leqslant \sqrt{\frac{I^2log\frac{4}{\delta}}{2N_l}}
		\nonumber
	\end{equation}
	with the probability at least $ 1 - \frac{\delta}{2} $. 
	Therefore, 
	\begin{equation}
		\begin{aligned}
			\mathop{sup}_{f\in\mathcal{F}}| \hat{R}(f) - R(f) | & \leqslant \mathbb{E}_{\bar{\mathcal{D}}_L}[\hat{R}(f) - R(f)] + \sqrt{\frac{I^2log\frac{4}{\delta}}{2N_l}} \\
			& \leqslant 2\mathfrak{R}_{N_l} (\hat{\ell} o \mathcal{F}) + \sqrt{\frac{I^2log\frac{4}{\delta}}{2N_l}} \\ 
			& \leqslant 2\omega_KL_\phi[\pi_K(K-1) + K] \mathfrak{R}_{N_l}(\mathcal{F}) 
			+ \sqrt{\frac{I^2log\frac{4}{\delta}}{2N_l}},
		\end{aligned}
		\nonumber
	\end{equation}
	which proves the Lemma 1.
\end{proof}

\section{The Proof of Theorem 1}
\begin{theorem}
	For any $ \delta > 0 $, with the probability at least $ 1- \delta / 2 $, we have 
	\begin{equation}
		\begin{split}
			R(\hat{f}) - R(f^{\ast})  \leqslant 4 \omega_{K} L_{\phi}  [ \pi_{K} (K-1) + K] \mathfrak{R}_{N_l}(\mathcal{F}) 
			+ \sqrt{\frac{2I^2log\frac{4}{\delta}}{N_l}}.
		\end{split}
		\nonumber
	\end{equation}
\end{theorem}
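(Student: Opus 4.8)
The plan is to prove Theorem~1 by the standard excess-risk decomposition, using Lemma~1 as the single analytical input. The trained classifier $\hat{f}$ is the empirical risk minimizer over the hypothesis set, i.e. $\hat{f} = \arg\min_{f \in \mathcal{F}} \hat{R}(f)$, whereas $f^{\ast} = \arg\min_{f \in \mathcal{F}} R(f)$ minimizes the population risk. The ERM property gives the pivotal inequality $\hat{R}(\hat{f}) \leq \hat{R}(f^{\ast})$, and it is exactly this fact that lets a two-sided uniform deviation bound be converted into a bound on the excess risk $R(\hat{f}) - R(f^{\ast})$.

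First I would write the telescoping identity
\[
R(\hat{f}) - R(f^{\ast}) = \bigl[R(\hat{f}) - \hat{R}(\hat{f})\bigr] + \bigl[\hat{R}(\hat{f}) - \hat{R}(f^{\ast})\bigr] + \bigl[\hat{R}(f^{\ast}) - R(f^{\ast})\bigr].
\]
By the ERM property the middle bracket is nonpositive and may be dropped, while each of the two remaining brackets is at most $\sup_{f \in \mathcal{F}} |\hat{R}(f) - R(f)|$, yielding
\[
R(\hat{f}) - R(f^{\ast}) \leq 2 \sup_{f \in \mathcal{F}} |\hat{R}(f) - R(f)|.
\]
Next I would substitute the bound of Lemma~1, which holds with probability at least $1 - \delta/2$, to get
\[
R(\hat{f}) - R(f^{\ast}) \leq 4 \omega_{K} L_{\phi} [\pi_{K}(K-1) + K] \mathfrak{R}_{N_l}(\mathcal{F}) + 2\sqrt{\frac{I^2 \log\frac{4}{\delta}}{2N_l}}.
\]
Finally I would absorb the factor $2$ into the root via $2\sqrt{a} = \sqrt{4a}$, so that $2\sqrt{I^2 \log(4/\delta)/(2N_l)} = \sqrt{2 I^2 \log(4/\delta)/N_l}$, which is precisely the claimed deviation term.

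There is essentially no obstacle remaining once Lemma~1 is granted: the genuine difficulty of the argument—the McDiarmid bounded-differences estimate on the $C_i$, the symmetrization step that introduces $\mathfrak{R}_{N_l}(\hat{\ell} \circ \mathcal{F})$, and the Talagrand contraction that peels the $L_{\phi}$-Lipschitz loss off to leave $L_{\phi} \mathfrak{R}_{N_l}(\mathcal{F})$—is already discharged in the proof of Lemma~1. The remaining work is the decomposition and the bookkeeping of constants. The one point I would check carefully is that the factor $2$ in $R(\hat{f}) - R(f^{\ast}) \leq 2\sup_{f}|\hat{R}(f)-R(f)|$ propagates into \emph{both} terms of the Lemma~1 bound, doubling the Rademacher coefficient from $2$ to $4$ and rescaling the deviation term as above; and that the confidence level remains $1-\delta/2$, since the whole argument is conditioned on the single high-probability event supplied by Lemma~1, so no union bound and no further loss of probability is incurred.
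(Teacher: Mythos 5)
Your proof is correct and follows essentially the same route as the paper's: the same three-term telescoping decomposition, the bound of the two deviation brackets by $2\sup_{f\in\mathcal{F}}|\hat{R}(f)-R(f)|$, and the substitution of Lemma~1 with the constant bookkeeping $2\sqrt{I^2\log(4/\delta)/(2N_l)}=\sqrt{2I^2\log(4/\delta)/N_l}$, at the unchanged confidence level $1-\delta/2$ since everything is conditioned on the single high-probability event of Lemma~1. The one place you diverge is actually an improvement: you dispatch the middle term via the ERM inequality $\hat{R}(\hat{f})\leq\hat{R}(f^{\ast})$, so it is nonpositive for every finite $N_l$, whereas the paper asserts $\hat{R}(\hat{f})-\hat{R}(f^{\ast})=0$ ``with $N_l\to\infty$''---an asymptotic claim that sits awkwardly inside a finite-sample bound and is unnecessary, since the standard ERM argument you give yields the stated inequality directly.
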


\begin{proof}\let\qed\relax
	\begin{equation}
		\begin{aligned}
			R(\hat{f}) - R(f^*) & = (\hat{R}(\hat{f}) - \hat{R}(f^*)) + (R(\hat{f})-\hat{R}(\hat{f})) + (\hat{R}(f^*) - R(f^*))
		\end{aligned}
		\nonumber
	\end{equation}
	with $ N_l \rightarrow \infty $, we have $ \hat{R}(\hat{f}) - \hat{R}(f^*) = 0 $. By using $ R(\hat{f}) - \hat{R}(\hat{f}) \leqslant | R(\hat{f}) - \hat{R}(\hat{f}) |, \hat{R}(f^*) - R(f^*) \leqslant |\hat{R}(f^*) - R(f^*)| $, and Lemma 1, we have 
	\begin{equation}
		\begin{aligned}
			R(\hat{f}) - R(f^*) & \leqslant 0 + 2 \mathop{sup}_{f\in\mathcal{F}} | \hat{R}(f) - R(f) | \\
			& \leqslant 4\omega_KL_\phi [ \pi_{K} (K-1) + K] \mathfrak{R}_{N_l}(\mathcal{F}) 
			+ \sqrt{\frac{2I^2log\frac{4}{\delta}}{N_l}}
		\end{aligned}
		\nonumber
	\end{equation}
	with the probability at least $ 1 - \frac{\delta}{2} $, which finishes the proof.
\end{proof}

\end{document}